\documentclass[11pt]{article}
\PassOptionsToPackage{table}{xcolor}
\usepackage[preprint]{acl}

\usepackage{times}
\usepackage{latexsym}
\usepackage[utf8]{inputenc} 
\usepackage[T1]{fontenc}    
\usepackage[table]{xcolor}  
\usepackage{hyperref}       
\usepackage{url}            
\usepackage{booktabs}       
\usepackage{amsfonts}       
\usepackage{nicefrac}       
\usepackage{microtype}      
\usepackage[most]{tcolorbox}
\usepackage{multirow}
\usepackage{graphicx}
\usepackage{amsmath}
\usepackage{pifont}
\usepackage{algorithm}%
\usepackage{algorithmicx}%
\usepackage{algpseudocode}
\usepackage{float}
\usepackage{tabularx}
\usepackage{lipsum}
\usepackage{multicol}
\usepackage{caption}
\usepackage{refcount}
\usepackage{makecell}
\usepackage{newfloat}
\usepackage{listings}
\usepackage{adjustbox}
\usepackage{wrapfig}
\usepackage{amssymb}
\usepackage{amsthm}
\usepackage{enumitem}
\usepackage{bm}
\theoremstyle{plain}
\newtheorem{proposition}{Proposition}

\theoremstyle{remark}

\definecolor{darkblue}{rgb}{0, 0, 0.5}
\hypersetup{colorlinks=true, citecolor=darkblue, linkcolor=darkblue, urlcolor=darkblue}

\title{Uncertainty-Aware Budget Allocation for \\Adaptive Test-Time Reasoning}

\author{Manh Nguyen\thanks{Corresponding Author: manh.nguyen@deakin.edu.au} , Sunil Gupta \& Hung Le \\
Applied Artificial Intelligence Initiative\\
Deakin University, Australia \\
}

\begin{document}

\maketitle

\begin{abstract}
Sampling multiple responses improves language model reasoning, but uniform compute allocation is inefficient: easy questions are over-sampled while hard questions remain under-explored.
We propose \textbf{Uncertainty-Aware Budget Allocation (UAB)}, a concave integer optimization framework that reallocates a fixed sampling budget based on per-question uncertainty estimated at no additional inference cost.
In Phase~1, every question receives one generation; its average negative log-likelihood
(ANLL), extracted directly from output log-probabilities, serves as a difficulty signal
while the generation contributes to the final vote.
In Phase~2, the remaining budget is allocated by a marginal-greedy algorithm that
solves a concave coverage-maximization surrogate exactly: uncertain questions receive
more sampling budget while confident questions receive fewer additional samples.
Evaluated on six open-weight and black-box models spanning 1.5B to 27B parameters and
five reasoning benchmarks covering math, logic, and preference tasks, UAB outperforms
baselines by up to $+3\%$ in average accuracy and up to $+5\%$ on individual
benchmarks, with the largest gains in low-resource settings, requiring no auxiliary
model or additional LLM call. Code is publicly available at \url{https://github.com/manhitv/UAB}.
\end{abstract}

\section{Introduction}

\begin{figure}[t]
    \centering
    \includegraphics[width=\columnwidth]{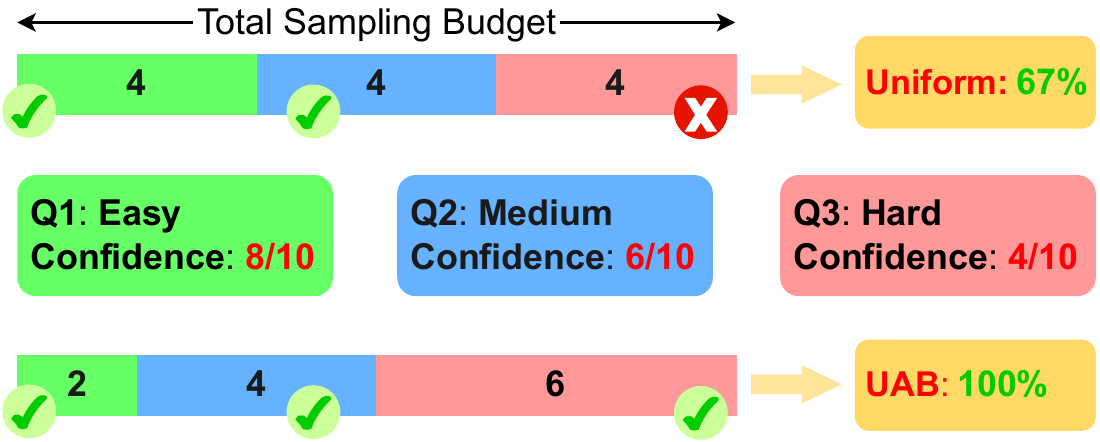}
    \caption{
        \textbf{Uniform vs.\ Uncertainty-Aware Budget (UAB) allocation}.
        Under total budget $B{=}12$ across three questions, Uniform allocates
        $N{=}4$ per question, wasting samples on the easy question (67\% accuracy).
        UAB reallocates the same budget by per-question confidence, achieving 100\%
        accuracy at no extra cost.
    }
    \label{fig:uav_overview}
\end{figure}

Large language models (LLMs) are increasingly deployed for reasoning-intensive
tasks where inference cost is a primary deployment concern.
Test-time compute scaling, which generates multiple outputs and aggregates them
via self-consistency~\citep{wang2022self} or best-of-$N$
selection~\citep{kang2025scalable}, has emerged as a powerful
lever for improving accuracy~\citep{snell2024scaling, wu2025inference}.
However, its benefit exhibits diminishing returns: each additional sample yields a smaller accuracy gain than the previous one, and this marginal value varies sharply across questions~\citep{brown2024large}.
This raises a fundamental resource-allocation question: \emph{given a fixed total
generation budget $B$ for $M$ questions, how should samples be distributed to maximize
accuracy?}

Self-consistency (SC) answers with uniform allocation: every question receives
exactly $N$ samples regardless of difficulty.
This is wasteful, because easy questions saturate after one correct generation
while genuinely hard ones remain under-sampled, and the inefficiency is sharpest
at small $N$, precisely the regime where inference cost is most binding.
A growing line of work confirms that uncertainty signals can redirect compute
toward harder questions, but each existing approach makes a structural
compromise unsuited to fixed-budget cross-question reasoning.
Sequential early-stopping methods such as Adaptive-Consistency~\citep{aggarwal2023let} and ReASC~\citep{kim2026reliability} optimise a per-question stopping rule under an unbounded cap: their realised cost depends on how many questions trigger early termination, so they cannot honour a fixed total budget $B$ a priori, nor reallocate samples saved on easy questions toward harder ones.
Auxiliary-component methods rely on a separate LLM or trained estimator to predict difficulty~\citep{ji2025seer, wang2025make, anonymous2026sonata, yoon2026breaking}, adding setup or per-question overhead that dominates at small $N$.
Intra-trace budgeting methods operate on a different decision
variable: MUR~\citep{yan2025mur} distributes thinking tokens across reasoning
steps within a single chain-of-thought, and SelfBudgeter~\citep{li2025selfbudgeter} RL-trains the model to predict a per-query token cap. Neither decides \textit{how many samples} to draw per question under a shared budget; both target within-response token allocation, which is an orthogonal axis to cross-question sample allocation.
To our knowledge, no existing method simultaneously tackles cross-question, fixed-budget, training-free, and inference-efficient in its difficulty estimation. 

We propose \textbf{Uncertainty-Aware Budget Allocation (UAB)}, a two-phase inference
framework for $M$ questions under total budget $B{=}N{\times}M$.
In \textbf{Phase~1}, every question receives exactly one generation; its average
negative log-likelihood (ANLL) serves as a zero-cost difficulty signal, and the
generation is included in the final answer aggregation.
In \textbf{Phase~2}, the remaining $(N{-}1)M$ budget is allocated via a marginal-gain greedy procedure, which is the exact solution to a concave integer program over per-question coverage probabilities under a surrogate objective.
Uncertain (high-ANLL) questions receive additional samples; confident questions
receive fewer, but their Phase-1 generation still contributes to the final vote.
Across six open-weight and black-box models ranging from 1.5B to 27B and five
benchmarks spanning math, logic, and preference tasks, UAB outperforms all baselines with the same budgets on average, with improvements concentrated at low budgets ($N{=}2$--$4$).
Average accuracy gains reach up to $+3\%$ over the second-best method and up to $+5\%$ on individual benchmarks.
The two-phase pipeline adds negligible wall-clock overhead; ablations confirm robustness across temperature settings, uncertainty metrics, and threshold-exit variants.

Our contributions are as follows. 
\begin{itemize}[noitemsep,topsep=2pt]
    \item We formulate LLM budget allocation as a principled formulation with robustness guarantee over an ANLL-derived coverage surrogate, and prove a sensitivity bound showing that the allocation degrades gracefully under probability estimation error.
    \item We propose UAB, a two-phase uncertainty-aware allocation method built on this formulation that requires no auxiliary model and adds negligible wall-clock overhead.
    \item Experiments across all model–benchmark pairs show UAB achieves the highest average accuracy, with ablations validating its key design choices.
\end{itemize}

\section{Related Work}

\paragraph{Self-Consistency and Adaptive Sampling.}
Self-consistency \citep{wang2022self} samples multiple reasoning chains and
majority-votes, spending identical compute on every input.
Adaptive variants reduce this waste either by sequential early stopping
(Adaptive-Consistency \citep{aggarwal2023let}; ReASC \citep{kim2026reliability},
which replaces vote counts with reliability-weighted evidence), or by injecting
an external difficulty signal: SEER \citep{ji2025seer} uses a fast System~1
entropy estimator, DSC \citep{wang2025make} pipelines an LLM-judge difficulty
ranking with neighbor-based sample-size pre-allocation (its inner stopping rule
reuses the Dirichlet criterion of ASC), while ACTSC \citep{yoon2026breaking} and Sonata \citep{anonymous2026sonata}
train offline predictors over activations or hidden states.
A complementary line exploits intrinsic confidence: Self-Certainty
\citep{kang2025scalable} aggregates distributional confidence (KL divergence from
uniform) across samples for best-of-$N$ selection, while MUR \citep{yan2025mur}
tracks step-level momentum uncertainty to allocate within-trace compute.
\citet{feng2025optimal} analyse blended SC from a voting-theoretic angle.

All share the premise that uncertainty tracks marginal accuracy gain
\citep{brown2024large}, but differ from UAB on three axes. (i)~UAB targets the
\emph{fully-batched single-model} regime with all generations issued in two
rounds, so sequential stopping (ASC, ReASC) and multi-tier routing (SEER) do not
apply. (ii)~Unlike Sonata and ACTSC, UAB needs \emph{no auxiliary training,
judge, or extra pass}; its signal is the per-token log-probabilities already
produced in Phase~1. (iii)~UAB casts allocation as a concave knapsack with a
formal optimality guarantee, whereas prior methods rely on heuristic stopping or
tier assignment.

\paragraph{Budget Allocation and Compute Scaling.}
Test-time compute scaling \citep{snell2024scaling, wu2025inference} establishes
that sample-level reallocation improves accuracy at fixed cost, extended by
best-of-$N$ and process-reward policies \citep{lightman2023let, cobbe2021training}.
Recent work generalises beyond fixed-batch SC: AVA \citep{ava2026anytime}
jointly schedules search, sampling, and verification under a user budget, and
SelfBudgeter \citep{li2025selfbudgeter} RL-trains the model to predict a
per-query token budget. Beyond LLMs, budget-constrained inference in active
learning \citep{settles2009active} motivates greedy allocation under concave
utility. UAB is unique in combining a zero-cost model-internal signal (ANLL)
with an exact-optimal greedy solution to a concave coverage objective,
requiring no extra model, training stage, or LLM call.

\section{Budget-Aware Sample Allocation}
\label{sec:allocation}

We formalize the problem of distributing a finite inference budget $B{=}N{\times}M$ across
$M$ questions, where each question $x_i$ receives
$N_i \geq 1$ samples with $\sum_{i=1}^M N_i = B$.

\subsection{Optimization Problem}
\label{sec:formulation}
Let $p_i \in [0,1]$ denote the per-sample success probability for question $x_i$, i.e., the probability that a single generation from the model is correct. Given these probabilities, we seek the integer allocation $\bm{N}{=}(N_1, \ldots, N_M)$ that maximizes the total expected number of correctly solved questions under budget $B$:
\begin{equation}
    \begin{aligned}
    \max_{\bm{N}} &\;\sum_{i=1}^{M} \bigl[1 - (1-p_i)^{N_i}\bigr] \\
    \text{s.t.} &\;\sum_{i=1}^{M} N_i = B,\;
    N_i \geq 1.
    \end{aligned}
    \label{eq:opt}
\end{equation}
The objective $1-(1-p_i)^{N_i}$ is the probability that \emph{at least one} of $N_i$ i.i.d.\ samples is correct, a coverage surrogate for majority-vote accuracy.
We discuss the gap between coverage, majority vote, and the i.i.d.\ approximation, in Appendix~\ref{apx:surrogate_discussion}.
Without loss of generality, set $N_i{=}1 + n_i$ with $n_i \geq 0$ and $B_{\mathrm{eff}}{=}B - M{=}(N-1)M$, reducing~\eqref{eq:opt} to an equivalent problem with minimum $0$.

Problem~\eqref{eq:opt} is a separable concave knapsack, a class for which marginal-greedy allocation is known to be exact~\citep{fox1966discrete, federgruen1986greedy}. The novelty lies in the instantiation: we derive $p_i$ from model-generated ANLL (Section~\ref{sec:est_probs}), casting LLM inference budget allocation as a principled
formal optimization over a coverage surrogate for the first time.

\subsection{Estimating Success Probabilities}
\label{sec:est_probs}

\paragraph{From ANLL to success probability.}
Let $y_i$ be a response sampled from the LLM for question $x_i$, and let
$L = |y_i|$ denote its length. Define
\begin{align}
    s_i \;=\; -\frac{1}{L}\sum_{t=1}^{L} \log P(y_{i,t}|x_i,\, y_{i,<t})
    \label{eq:anll}
\end{align}
as the \emph{average negative log-likelihood} (ANLL) of the generation.
ANLL is a standard proxy for model uncertainty in LLMs~\citep{manakul2023selfcheckgpt}:
low ANLL indicates a high-confidence, likely-correct response, while high ANLL
signals uncertainty.
We convert ANLL to a per-sample success probability via:
\begin{align}
    p_i \;=\; e^{-s_i/T}, \quad T > 0,
    \label{eq:score_to_p}
\end{align}
where $T$ is a temperature that controls how sharply the budget concentrates on
uncertain questions: as $T \to 0$ budget concentrates entirely on the hardest question,
while $T \to \infty$ recovers uniform allocation.
High $s_i$ (uncertain) maps to low $p_i$; low $s_i$ (confident)
maps to $p_i$ close to 1. 

\paragraph{Why ANLL suffices.}
ANLL is a heuristic proxy for correctness rather than a calibrated probability
estimator. The following proposition shows that the coverage objective is robust to
miscalibration: estimation error in $\bm{p}$ degrades the objective at most linearly
in both budget and $\ell_\infty$ estimation error.
\begin{proposition}[Sensitivity to probability estimation error]
\label{prop:sensitivity}
Let $J(\bm{N}; \bm{p}) = \sum_{i=1}^{M} [1 - (1-p_i)^{N_i}]$ denote the coverage objective
in~\eqref{eq:opt}. For any $\bm{p}, \hat{\bm{p}} \in [0,1]^M$ and any allocation
$\bm{N} \in \mathbb{Z}_+^M$ with $\sum_i N_i = B$,
\[
  |J(\bm{N}; \hat{\bm{p}}) - J(\bm{N}; \bm{p})| \;\leq\; B \cdot \|\hat{\bm{p}} - \bm{p}\|_\infty.
\]
Consequently, if $\hat{\bm{N}}$ is the greedy allocation under estimates $\hat{\bm{p}}$
and $\bm{N}^\star$ is optimal under true probabilities $\bm{p}^\star$,
\[
  J(\bm{N}^\star; \bm{p}^\star) - J(\hat{\bm{N}}; \bm{p}^\star)
  \;\leq\; 2B \cdot \|\hat{\bm{p}} - \bm{p}^\star\|_\infty.
\]
\end{proposition}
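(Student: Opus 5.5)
The plan is to prove the first inequality term by term and then derive the second from it with a standard three-term decomposition.

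\textbf{Step 1: per-question bound.} For fixed integer $N_i \geq 0$ I will use the elementary inequality
\[
  |a^{n} - b^{n}| \leq n\,|a-b| \qquad \text{for } a,b\in[0,1].
\]
It follows either from the factorization $a^n-b^n=(a-b)\sum_{k=0}^{n-1}a^k b^{n-1-k}$, where each term of the sum lies in $[0,1]$, or from the mean value theorem, since $|\tfrac{d}{dx}x^n| = n x^{n-1}\leq n$ on $[0,1]$. Applying it with $a=1-\hat p_i$ and $b=1-p_i$ gives
\[
  \bigl|[1-(1-\hat p_i)^{N_i}]-[1-(1-p_i)^{N_i}]\bigr| \leq N_i|\hat p_i-p_i| .
\]

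\textbf{Step 2: sum over questions.} Summing over $i$ and using the triangle inequality,
\[
  |J(\bm N;\hat{\bm p})-J(\bm N;\bm p)| \leq \sum_i N_i \,\|\hat{\bm p}-\bm p\|_\infty = B\,\|\hat{\bm p}-\bm p\|_\infty .
\]
The only fact used here is $\sum_i N_i=B$. The case $N_i=0$ is trivial, so the argument covers all of $\mathbb{Z}_+^M$.

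\textbf{Step 3: optimality gap.} Write $\varepsilon=\|\hat{\bm p}-\bm p^\star\|_\infty$ and decompose
\begin{align*}
J(\bm N^\star;\bm p^\star)-J(\hat{\bm N};\bm p^\star)
&= \bigl[J(\bm N^\star;\bm p^\star)-J(\bm N^\star;\hat{\bm p})\bigr]
 + \bigl[J(\bm N^\star;\hat{\bm p})-J(\hat{\bm N};\hat{\bm p})\bigr]\\
&\quad + \bigl[J(\hat{\bm N};\hat{\bm p})-J(\hat{\bm N};\bm p^\star)\bigr].
\end{align*}
By Steps 1--2, the first and third brackets are each at most $B\varepsilon$. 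Both $\bm N^\star$ and $\hat{\bm N}$ are feasible with total $B$, so the bound applies to each.

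\textbf{Main obstacle.} The middle bracket must be shown to be $\leq 0$. This requires that $\hat{\bm N}$ maximizes $J(\cdot;\hat{\bm p})$ over the same feasible set that contains $\bm N^\star$. I will invoke the exactness of marginal-greedy allocation for separable concave knapsacks cited after~\eqref{eq:opt} \citep{fox1966discrete, federgruen1986greedy}. Each summand $n\mapsto 1-(1-\hat p_i)^n$ has nonincreasing increments $\hat p_i(1-\hat p_i)^{n}$, so it is concave.

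The care point is the constraint set. The greedy algorithm operates on the reduced problem with $n_i\geq 0$ and budget $B_{\mathrm{eff}}$. That problem is equivalent to~\eqref{eq:opt} under the shift $N_i=1+n_i$, so $\hat{\bm N}$ is optimal among allocations with $N_i\geq1$ and $\sum_i N_i=B$. This holds provided $\bm N^\star$ is defined as optimal over that same set. I will state this assumption explicitly, since the statement's ``$\bm N\in\mathbb{Z}_+^M$'' phrasing is looser.

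With that, the middle term is nonpositive, and the total is at most $2B\varepsilon$.
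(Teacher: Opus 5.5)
Your proposal is correct and is essentially the paper's own proof: the $N_i$-Lipschitz bound on $p\mapsto 1-(1-p)^{N_i}$ summed using $\sum_i N_i=B$, then the same three-term decomposition in which exactness of greedy under $\hat{\bm p}$ makes the middle bracket nonpositive. Your explicit requirement that $\bm{N}^\star$ be optimal over the same set ($N_i\ge 1$, $\sum_i N_i=B$) is a genuine sharpening that the paper leaves implicit. The same shift also shows that the exactness you invoke belongs to the greedy with marginals $\hat p_i(1-\hat p_i)^{1+n_i}$, not to the rule $\hat p_i(1-\hat p_i)^{e_i}$ literally written in Algorithm~\ref{alg:uab}. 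That rule ignores the Phase-1 sample: for $\hat{\bm p}=(0.9,0.5)$ and $B=4$ it returns $(2,2)$, although $(1,3)$ is optimal.
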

\begin{proof}
See Appendix~\ref{apx:proof:sensitivity}.
\end{proof}
\noindent
Calibration error therefore degrades the objective gracefully: an imperfectly
calibrated signal that merely \emph{ranks} questions by difficulty still yields
useful allocations provided the estimation error
$\|\hat{\bm{p}} - \bm{p}^\star\|_\infty$ remains bounded. We verify empirically
in Section~\ref{sec:ablation} that ANLL anti-correlates with correctness
($r \approx -0.3$) and produces consistent gains across all model–benchmark pairs.

\subsection{Uncertainty-Aware Budget Allocation}
\paragraph{Two-Phase Inference Pipeline.}
\label{sec:pipeline}
UAB collects the difficulty signal and allocates the remaining budget in two phases.
\textit{Phase 1 is fixed and uniform}: every question receives exactly one generation, ensuring all questions receive at least one sample and the difficulty signal is collected at zero additional cost.
\textit{Phase 2 is adaptive}: the remaining $B_{\mathrm{eff}} = (N{-}1)M$ budget is distributed greedily by repeatedly assigning each unit to the question with the largest marginal gain $\Delta_i(e_i) = p_i(1-p_i)^{e_i}$.
The procedure runs in $O(B_{\mathrm{eff}} \cdot M)$ time and is exact for the coverage surrogate; a KKT-style optimality condition and exactness proof are provided in Appendix~\ref{apx:proof:optimality} for completeness.
Intuitively, questions with low $p_i$ (uncertain, harder) maintain high marginal gains longer and naturally receive more samples; questions with high $p_i$ saturate quickly.
The full procedure is summarized in Algorithm~\ref{alg:uab}.

\paragraph{Black-box Extension.}
The UAB \emph{framework} is signal-agnostic: any per-question difficulty scalar can
replace ANLL in Algorithm~\ref{alg:uab} without changing the rest of the pipeline.
UAB's default signal, ANLL, requires per-token log-probabilities that commercial APIs
do not expose. In black-box settings we substitute \emph{Verbalized Confidence Scores
(VCS)}~\citep{xiong2024can}: a confidence-elicitation instruction is appended to each
question so that the model rates its own confidence on a scale of 1 to 10 within the
same generation; the score is normalized to $[0,1]$ and used directly as $p_i$ in place
of $e^{-s_i/T}$ (prompt in Appendix~\ref{app:vcs_prompt}). We compare VCS and ANLL in
Section~\ref{sec:ablation}.

\begin{algorithm}[t]
\small
\caption{Uncertainty-Aware Budget Allocation}
\label{alg:uab}
\begin{algorithmic}[1]
\Require Questions $\{x_i\}_{i=1}^M$, per-question budget $N$, temperature $T$
\Ensure Final answer $\hat{a}_i$ for each question
\State \textbf{// Phase 1: Fixed round} 
\For{each question $x_i$ \textbf{in parallel}}
    \State $y_i^0 \leftarrow \text{Generate}(x_i)$
    \State $s_i \leftarrow \text{ANLL}(y_i^0 \mid x_i)$ \Comment{no extra LLM call}
    \State $p_i \leftarrow e^{-s_i/T}$
\EndFor
\State \textit{// Marginal-greedy budget allocation}
\State $e_i \leftarrow 0$ for all $i$;\; $B_{\mathrm{eff}} \leftarrow (N{-}1)M$
\For{$t = 1, 2, \ldots, B_{\mathrm{eff}}$}
    \State $i^* \leftarrow \arg\max_{i} \; p_i(1-p_i)^{e_i}$
    \State $e_{i^*} \leftarrow e_{i^*} + 1$
\EndFor
\State \textbf{// Phase 2: Adaptive round}
\For{each question $x_i$ with $e_i > 0$ \textbf{in parallel}}
    \State $\{y_i^1, \ldots, y_i^{e_i}\} \leftarrow \text{Generate}(x_i,\; \text{count}=e_i)$
\EndFor
\State \textit{// Aggregation}
\For{each question $x_i$}
    \State $\hat{a}_i \leftarrow \text{MajorityVote}(y_i^0,\, y_i^1, \ldots, y_i^{e_i})$
\EndFor
\State \textbf{return} $\{\hat{a}_i\}$
\end{algorithmic}
\end{algorithm}


\section{Experiments}
\label{sec:experiments}
\subsection{Experimental Setup}

\begin{table*}[t]
\centering
\setlength{\tabcolsep}{4pt}
\caption{Accuracy (\%) at $N{=}4$ across five benchmarks. $N{=}1$ denotes single-call reference per model. \textbf{Bold} = leads runner-up by ${\geq}1$ SEM; \underline{underline} = numerically best otherwise. $\dagger$~our method.}
\label{tab:main}
\resizebox{\textwidth}{!}{%
\begin{tabular}{llcccccc}
\toprule
\textbf{Model} & \textbf{Method} & \textbf{DeepScaler} & \textbf{GPQA} & \textbf{HH-RLHF} & \textbf{Formal Logic} & \textbf{MATH500} & \textbf{Avg} \\
\midrule
\multirow{6}{*}{Qwen2.5-1.5B} 
 & $N{=}1$ & 17.4 $\pm$ 1.2 & 24.4 $\pm$ 2.0 & 50.9 $\pm$ 3.0 & 41.7 $\pm$ 2.5 & 44.6 $\pm$ 0.0 & 35.8 \\
 \cmidrule{2-8}
 & Random & 21.4 $\pm$ 2.3 & 23.7 $\pm$ 0.9 & 46.6 $\pm$ 1.2 & 40.9 $\pm$ 0.5 & 47.2 $\pm$ 1.5 & 36.0 \\
 & Length & 21.2 $\pm$ 1.7 & 24.7 $\pm$ 2.8 & 50.4 $\pm$ 0.2 & 43.4 $\pm$ 3.7 & 48.3 $\pm$ 0.7 & 37.6 \\
 & Uniform & 19.2 $\pm$ 0.6 & \underline{27.3 $\pm$ 2.6} & 48.2 $\pm$ 2.0 & 42.1 $\pm$ 1.5 & 49.0 $\pm$ 1.4 & 37.1 \\
 & LLM-Judge & 18.8 $\pm$ 1.4 & 25.8 $\pm$ 1.3 & 50.2 $\pm$ 2.3 & 45.8 $\pm$ 1.8 & 48.1 $\pm$ 1.2 & 37.7 \\
 & \cellcolor{green!15}\textbf{UAB}$^\dagger$ & \cellcolor{green!15}\textbf{22.4 $\pm$ 1.6} & \cellcolor{green!15}26.9 $\pm$ 1.6 & \cellcolor{green!15}\underline{51.7 $\pm$ 2.8} & \cellcolor{green!15}\underline{47.1 $\pm$ 2.7} & \cellcolor{green!15}\textbf{52.5 $\pm$ 1.2} & \cellcolor{green!15}\textbf{40.1} \\
\midrule
\multirow{6}{*}{Llama3.2-3B} 
 & $N{=}1$ & 16.3 $\pm$ 0.9 & 19.5 $\pm$ 3.0 & 45.2 $\pm$ 0.6 & 37.2 $\pm$ 1.0 & 31.4 $\pm$ 0.6 & 29.9 \\
 \cmidrule{2-8}
 & Random & 18.8 $\pm$ 1.7 & 21.4 $\pm$ 2.1 & 50.5 $\pm$ 1.7 & 38.8 $\pm$ 2.7 & 34.2 $\pm$ 0.3 & 32.7 \\
 & Length & 20.2 $\pm$ 1.0 & 17.2 $\pm$ 1.8 & 51.3 $\pm$ 3.5 & 39.7 $\pm$ 2.1 & 35.5 $\pm$ 0.3 & 32.8 \\
 & Uniform & 17.9 $\pm$ 1.1 & 21.7 $\pm$ 2.0 & 52.2 $\pm$ 1.5 & 41.3 $\pm$ 2.4 & 34.2 $\pm$ 1.5 & 33.5 \\
 & LLM-Judge & 16.3 $\pm$ 1.0 & 18.7 $\pm$ 1.7 & 49.2 $\pm$ 4.9 & 39.2 $\pm$ 4.1 & 30.9 $\pm$ 0.5 & 30.9 \\
 & \cellcolor{green!15}\textbf{UAB}$^\dagger$ & \cellcolor{green!15}\textbf{21.4 $\pm$ 1.2} & \cellcolor{green!15}\underline{22.7 $\pm$ 2.8} & \cellcolor{green!15}\underline{54.3 $\pm$ 4.0} & \cellcolor{green!15}\textbf{43.4 $\pm$ 3.4} & \cellcolor{green!15}\textbf{38.3 $\pm$ 1.4} & \cellcolor{green!15}\textbf{36.0} \\
\midrule
\multirow{6}{*}{Qwen2.5-7B}  
 & $N{=}1$ & 27.5 $\pm$ 1.1 & 32.2 $\pm$ 1.1 & 52.8 $\pm$ 0.4 & 55.7 $\pm$ 3.3 & 66.4 $\pm$ 0.7 & 46.9 \\
 \cmidrule{2-8}
 & Random & 36.1 $\pm$ 0.7 & 32.8 $\pm$ 0.9 & 51.3 $\pm$ 2.5 & 56.2 $\pm$ 2.1 & 66.9 $\pm$ 1.1 & 48.7 \\
 & Length & 27.8 $\pm$ 1.7 & 33.3 $\pm$ 1.3 & 53.0 $\pm$ 4.2 & 54.0 $\pm$ 0.0 & 69.6 $\pm$ 0.9 & 47.5 \\
 & Uniform & 33.3 $\pm$ 2.4 & 33.7 $\pm$ 2.0 & 50.5 $\pm$ 2.7 & 56.5 $\pm$ 2.0 & 68.5 $\pm$ 1.0 & 48.5 \\
 & LLM-Judge & 34.3 $\pm$ 0.9 & 32.8 $\pm$ 1.7 & 53.0 $\pm$ 3.1 & 56.9 $\pm$ 1.8 & 68.1 $\pm$ 0.3 & 49.0 \\
 & \cellcolor{green!15}\textbf{UAB}$^\dagger$ & \cellcolor{green!15}\textbf{38.1 $\pm$ 1.5} & \cellcolor{green!15}\textbf{35.6 $\pm$ 1.9} & \cellcolor{green!15}\underline{53.2 $\pm$ 2.1} & \cellcolor{green!15}\textbf{59.2 $\pm$ 2.1} & \cellcolor{green!15}\underline{69.9 $\pm$ 1.3} & \cellcolor{green!15}\textbf{51.2} \\
\bottomrule
\end{tabular}
}
\end{table*}
\begin{figure*}[ht]
    \centering
    \includegraphics[width=\linewidth]{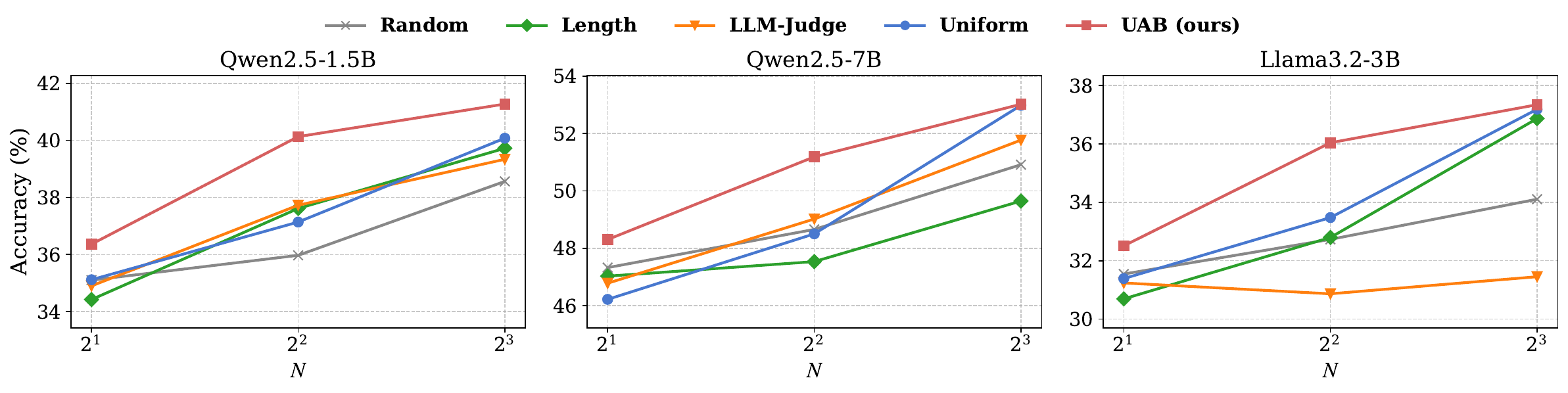}
    \caption{Average accuracy (\%) vs.\ per-question budget $N \in \{2, 4, 8\}$,
    averaged across benchmarks and models.}
    \label{fig:scaling}
\end{figure*}

\begin{table*}[t]
\centering
\caption{Accuracy (\%) of budget-allocation methods under fixed budget $N{=}4$. $N{=}1$ denotes single-call reference per model. \textbf{Bold} = numerically best. $\dagger$~UAB (ours).}
\label{tab:bigmodel}
\resizebox{\textwidth}{!}{%
\begin{tabular}{llcccccc}
\toprule
\textbf{Model} & \textbf{Method} & \textbf{DeepScaler} & \textbf{GPQA} & \textbf{HH-RLHF} & \textbf{Formal Logic} & \textbf{MATH500} & \textbf{Avg} \\
\midrule
\multirow{4}{*}{GPT-OSS-20B} & $N{=}1$ & 25.5 $\pm$ 0.8 & 16.5 $\pm$ 1.3 & 59.4 $\pm$ 0.6 & 76.1 $\pm$ 0.9 & 58.1 $\pm$ 0.2 & 47.1 \\
 & Uniform & 34.7 $\pm$ 1.7 & 22.1 $\pm$ 0.3 & 60.1 $\pm$ 0.5 & 86.5 $\pm$ 0.0 & 66.3 $\pm$ 1.1 & 53.9 \\
 & LLM-Judge & 35.1 $\pm$ 1.4 & 23.2 $\pm$ 1.0 & \textbf{61.4 $\pm$ 1.8} & 86.5 $\pm$ 0.8 & 66.5 $\pm$ 0.1 & 54.5 \\
 & \cellcolor{green!15}\textbf{UAB}$^\dagger$ & \cellcolor{green!15}\textbf{36.4 $\pm$ 1.2} & \cellcolor{green!15}\textbf{23.7 $\pm$ 2.3} & \cellcolor{green!15}60.6 $\pm$ 1.4 & \cellcolor{green!15}\textbf{87.8 $\pm$ 1.2} & \cellcolor{green!15}\textbf{67.0 $\pm$ 0.5} & \cellcolor{green!15}\textbf{55.1} \\
\midrule
\multirow{4}{*}{Gemma3-27B} & $N{=}1$ & 33.7 $\pm$ 0.8 & 40.9 $\pm$ 0.7 & 59.1 $\pm$ 1.9 & 78.0 $\pm$ 1.5 & 67.6 $\pm$ 0.4 & 55.9 \\
 & Uniform & 40.2 $\pm$ 1.1 & 49.0 $\pm$ 3.6 & 59.4 $\pm$ 1.9 & 82.5 $\pm$ 3.2 & 73.0 $\pm$ 0.5 & 60.8 \\
 & LLM-Judge & 40.3 $\pm$ 1.0 & 48.8 $\pm$ 0.6 & 60.2 $\pm$ 1.7 & 83.9 $\pm$ 0.5 & \textbf{73.3 $\pm$ 0.6} & 61.3 \\
 & \cellcolor{green!15}\textbf{UAB}$^\dagger$ & \cellcolor{green!15}\textbf{41.1 $\pm$ 1.2} & \cellcolor{green!15}\textbf{49.7 $\pm$ 1.5} & \cellcolor{green!15}\textbf{60.8 $\pm$ 1.8} & \cellcolor{green!15}\textbf{84.1 $\pm$ 1.4} & \cellcolor{green!15}73.2 $\pm$ 0.4 & \cellcolor{green!15}\textbf{61.8} \\
\midrule
\multirow{4}{*}{Cohere} & $N{=}1$ & 53.7 $\pm$ 0.0 & 52.2 $\pm$ 1.2 & 60.0 $\pm$ 0.9 & 81.4 $\pm$ 1.1 & 66.8 $\pm$ 0.3 & 62.8 \\
 & Uniform & 56.8 $\pm$ 0.2 & 55.0 $\pm$ 0.0 & 59.7 $\pm$ 0.9 & 84.3 $\pm$ 0.1 & 74.0 $\pm$ 0.0 & 66.0 \\
 & LLM-Judge & 57.1 $\pm$ 0.4 & 53.9 $\pm$ 1.9 & 58.7 $\pm$ 0.9 & 86.1 $\pm$ 2.8 & 74.1 $\pm$ 0.4 & 66.0 \\
 & \cellcolor{green!15}\textbf{UAB}$^\dagger$ & \cellcolor{green!15}\textbf{57.3 $\pm$ 1.8} & \cellcolor{green!15}\textbf{55.8 $\pm$ 3.2} & \cellcolor{green!15}\textbf{60.8 $\pm$ 1.0} & \cellcolor{green!15}\textbf{87.3 $\pm$ 2.2} & \cellcolor{green!15}\textbf{75.0 $\pm$ 2.0} & \cellcolor{green!15}\textbf{67.2} \\
\bottomrule
\end{tabular}
}
\end{table*}

\begin{figure*}[t]
    \centering
    \includegraphics[width=\textwidth]{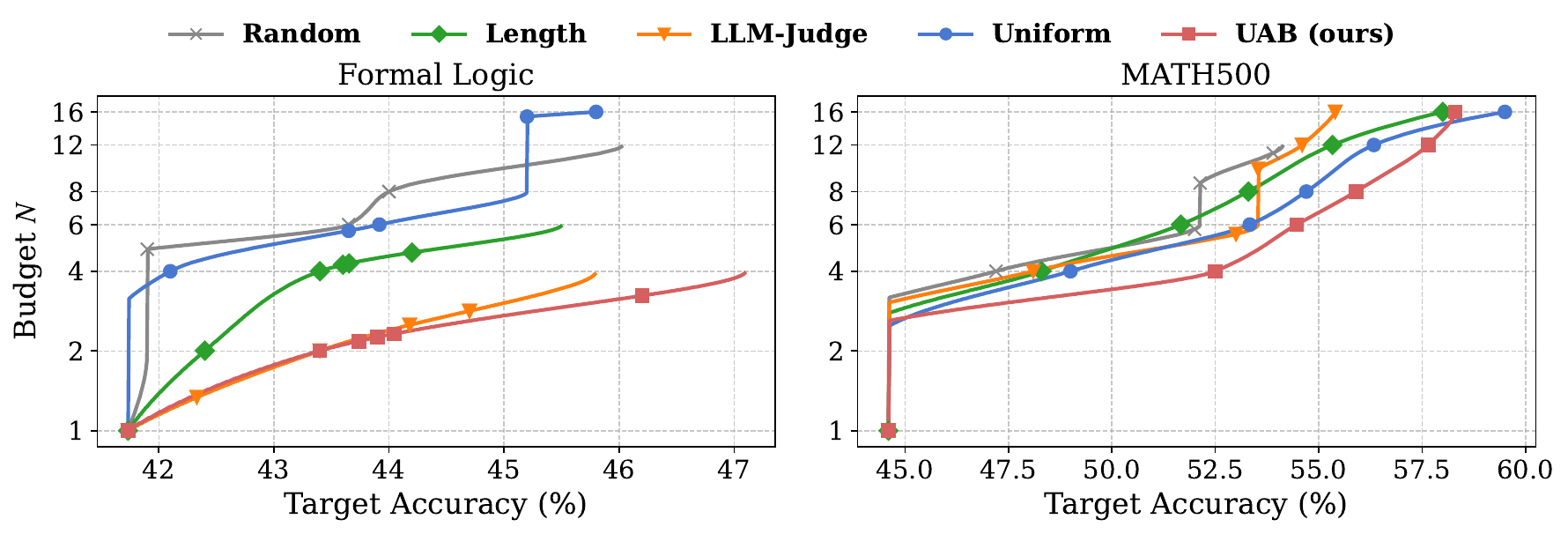}
    \caption{Minimum budget $N$ to first reach a given accuracy target (Qwen2.5-1.5B); lower is better.
    Curves invert a PCHIP interpolation of the accuracy--budget relationship. 
    Results for Llama3.2-3B are in Appendix Figure~\ref{fig:cost_accuracy_llama}; full results in Appendix Table~\ref{tab:cost_accuracy_appendix}.}
    \label{fig:cost_accuracy}
\end{figure*}

\paragraph{Models.}
We evaluate five open-weight LLMs spanning a practical parameter range from 1.5B to 27B:
\textbf{Qwen2.5-1.5B, 7B}, \textbf{Llama3.2-3B},
\textbf{GPT-OSS-20B}, and \textbf{Gemma3-27B}.
For the black-box setting (Section~\ref{sec:blackbox}) we additionally use
\textbf{Cohere} API (model \textit{Command-A-03-2025}).

\paragraph{Datasets.}
We report results on five benchmarks covering diverse reasoning types and a
wide range of test-set sizes (100--500 questions):
\textbf{(1) DeepScaler}~\citep{deepscaler2025}: challenging math competition problems;
\textbf{(2) GPQA Diamond}~\citep{rein2023gpqa}: graduate-level science questions;
\textbf{(3) HH-RLHF}~\citep{bai2022training}: preference classification;
\textbf{(4) MMLU Formal Logic}~\citep{hendrycks2020measuring}: deductive reasoning;
and \textbf{(5) MATH500}~\citep{hendrycks2021measuring}: competition mathematics.
Dataset details are in Appendix Table~\ref{tab:datasets}.

\paragraph{Baselines.}
Under the same total budget $B{=}N{\times}M$, we compare four baselines:
\textbf{(1) Random}, which allocates samples uniformly at random across questions;
\textbf{(2) Length}, which uses input question length as a difficulty proxy;
\textbf{(3) Uniform}, the self-consistency baseline~\citep{wang2022self}
where every question receives exactly $N$ samples;
and \textbf{(4) LLM-Judge}, which prompts the LLM to classify each question as
easy or hard, giving easy questions $1$ sample and distributing the remaining
$(N{-}1)$ budget units per question across hard ones, motivated by~\citet{wang2025make}.
\textbf{UAB (Ours)} also uses $1$ sample per question in Phase~1 (which
contributes to the final vote) and reallocates the remaining $(N{-}1)M$ budget
adaptively.
We use $N{=}4$ samples per question as the default budget, and additionally explore up to $N{=}16$.
All results are averaged across three seeds and reported as accuracy percentages (mean$\pm$std).
Implementation details are in Appendix~\ref{app:implementation}.

\subsection{Main Results}

\paragraph{UAB Consistently Outperforms Baselines.}
Table~\ref{tab:main} reports accuracy across all model--benchmark combinations.
UAB achieves the highest average accuracy for every model.
Average accuracy improves by $+2.5$--$3.0\%$ over Uniform across small models.
Gains over Uniform (the second-best method) are largest on reasoning-heavy benchmarks: up to $+5.0\%$ on Formal
Logic (Qwen2.5-1.5B) and $+4.8\%$ on DeepScaler (Qwen2.5-7B).
Several individual cells are within the observed standard deviation (e.g., Qwen2.5-7B on HH-RLHF); all 15 model--benchmark cells in Table~\ref{tab:main} show a positive direction, supporting the overall trend.
LLM-Judge underperforms Uniform in several settings, suggesting that binary easy/hard
classification is too coarse to guide fine-grained budget allocation.

\paragraph{Scaling With Budget.}
\label{sec:scaling}

Figure~\ref{fig:scaling} shows accuracy across increasing budget $N \in \{2, 4, 8\}$.
UAB yields the largest gains at low budgets ($N{=}2$--$4$): directing the few
available samples toward hard questions produces substantial gains over Uniform.
As $N$ grows to $8$, the advantage narrows but remains positive on average.
The narrowing gap is expected: at higher budgets, most questions receive enough samples
that further reallocation yields diminishing returns, and uncertain questions
increasingly approach the ceiling of the model's reasoning capacity.
Strong performance at small $N$ is practically significant, as UAB delivers the most
value precisely when the compute is most constrained.
Per-dataset breakdowns in Appendix Tables~\ref{tab:appendix_scaling_N2}
and~\ref{tab:appendix_scaling_N8}.

\paragraph{Cost at Fixed Accuracy.}
\label{sec:cost_accuracy}

Figure~\ref{fig:cost_accuracy} shows the minimum budget to reach a given accuracy
target for Qwen2.5-1.5B over $N \in \{1,2,4,6,8,12,16\}$.
UAB consistently requires fewer LLM calls than Uniform for any target accuracy in the
$N{=}4$--$12$ range.
On MATH500, UAB reaches $54.5\%$ at $N{=}6$ while Uniform needs $N{=}7.4$ ($19\%$
savings); the gap widens at higher targets ($27\%$ savings at $56\%$).
On Formal Logic, UAB achieves $43.7\%$ at $N{\approx}2$ versus Uniform's $N{\approx}5$.
Curves that terminate before $N{=}16$ indicate methods whose accuracy peaks and then
\emph{decreases} at higher budgets (e.g., LLM-Judge on MATH500 for Llama3.2-3B,
whose collapsed judge labels cause degradation beyond $N{=}8$); the minimum-$N$
inversion cannot extend beyond the accuracy peak.
These savings translate directly to reduced inference cost and faster response
in latency-sensitive deployments.

\paragraph{Black-Box and Big-Model Setting.}
\label{sec:blackbox}


Table~\ref{tab:bigmodel} extends evaluation to two large open-weight models and the black-box API (Cohere, using VCS).
Random and Length are omitted to minimise API cost; they are weaker than LLM-Judge and Uniform.
UAB achieves the highest average accuracy on every model:
gains over Uniform are $+1.2\%$ (GPT-OSS-20B), $+1.0\%$ (Gemma3-27B), and $+1.2\%$ (Cohere).
$N{=}1$ denotes the minimum-cost single-call baseline; Uniform sometimes worsens
accuracy at $N{=}4$ due to sampling noise (e.g., Cohere on HH-RLHF).
In contrast, UAB improves over the $N{=}1$ baseline by up to $+8.0\%$ on average (GPT-OSS-20B), confirming that adaptive budget allocation adds value consistently at larger model scales.

\subsection{Ablation Studies}
\label{sec:ablation}

\begin{table*}[t]
\centering
\setlength{\tabcolsep}{4pt}
\caption{Accuracy (\%) at $N{=}4$ comparing UAB without threshold exits to the best hard- and easy-threshold variants, mean over 3 seeds. \textbf{Bold}~=~best per model--dataset pair. \%Saved = mean budget saved across datasets (skip mode only; -- otherwise). Full results in Table~\ref{tab:threshold_full} (Appendix). $\dagger$~our default method.}
\label{tab:threshold_main}
\resizebox{\textwidth}{!}{%
\begin{tabular}{llccccccr}
\toprule
\textbf{Model} & \textbf{Method} & \textbf{DeepScaler} & \textbf{GPQA} & \textbf{HH-RLHF} & \textbf{Formal Logic} & \textbf{MATH500} & \textbf{Avg} & \textbf{\%Saved} \\
\midrule
\multirow{3}{*}{{Qwen2.5-1.5B}} & \cellcolor{green!15}\textbf{UAB$^\dagger$} & \cellcolor{green!15}\textbf{22.4} & \cellcolor{green!15}\textbf{26.9} & \cellcolor{green!15}\textbf{51.7} & \cellcolor{green!15}\textbf{47.1} & \cellcolor{green!15}\textbf{52.5} & \cellcolor{green!15}\textbf{40.1} & \cellcolor{green!15}-- \\
 & $+$~Hard ($\theta{=}0.5$) & 22.2 & 26.2 & 50.3 & 45.1 & 50.7 & 38.9 & -- \\
 & $+$~Easy ($\theta{=}0.7$) & 22.1 & 26.3 & \textbf{51.7} & 45.5 & 50.3 & 39.2 & 24.4 \\
\midrule
\multirow{3}{*}{{Llama3.2-3B}} & \cellcolor{green!15}\textbf{UAB$^\dagger$} & \cellcolor{green!15}\textbf{21.4} & \cellcolor{green!15}\textbf{22.7} & \cellcolor{green!15}\textbf{54.3} & \cellcolor{green!15}43.4 & \cellcolor{green!15}\textbf{38.3} & \cellcolor{green!15}\textbf{36.0} & \cellcolor{green!15}-- \\
 & $+$~Hard ($\theta{=}0.5$) & 20.7 & 21.9 & 47.7 & \textbf{44.4} & 38.0 & 34.6 & -- \\
 & $+$~Easy ($\theta{=}0.7$) & 21.0 & 21.3 & 47.7 & 41.3 & 36.2 & 33.5 & 23.3 \\

\midrule
\multirow{3}{*}{{Qwen2.5-7B}} & \cellcolor{green!15}\textbf{UAB$^\dagger$} & \cellcolor{green!15}38.1 & \cellcolor{green!15}35.6 & \cellcolor{green!15}\textbf{53.2} & \cellcolor{green!15}\textbf{59.2} & \cellcolor{green!15}\textbf{69.9} & \cellcolor{green!15}\textbf{51.2} & \cellcolor{green!15}-- \\
 & $+$~Hard ($\theta{=}0.5$) & \textbf{39.1} & \textbf{35.7} & 52.4 & 59.1 & \textbf{69.9} & \textbf{51.2} & -- \\
 & $+$~Easy ($\theta{=}0.7$) & 39.0 & 35.5 & 52.3 & 59.0 & 68.8 & 50.9 & 11.4 \\

\bottomrule
\end{tabular}
}
\end{table*}

\paragraph{Budget Allocation Analysis.}
\label{sec:budget_alloc_analysis}

\begin{figure}[ht]
    \centering
    \includegraphics[width=\columnwidth]{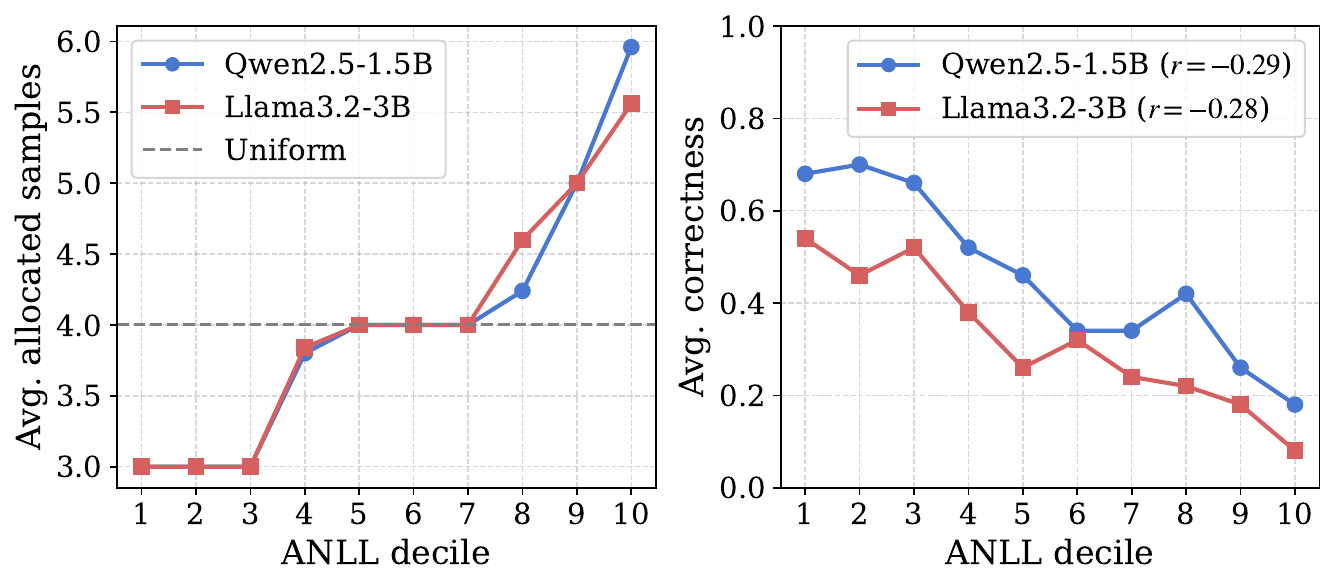}
    \caption{Budget allocation and ANLL--correctness correlation on MATH-500.
    \textbf{Left:} average allocated samples per ANLL decile.
    \textbf{Right:} average correctness per ANLL decile (Pearson $r \approx {-0.29}$).}
    \label{fig:budget_alloc}
\end{figure}

\begin{figure}[ht]
    \centering
    \includegraphics[width=\columnwidth]{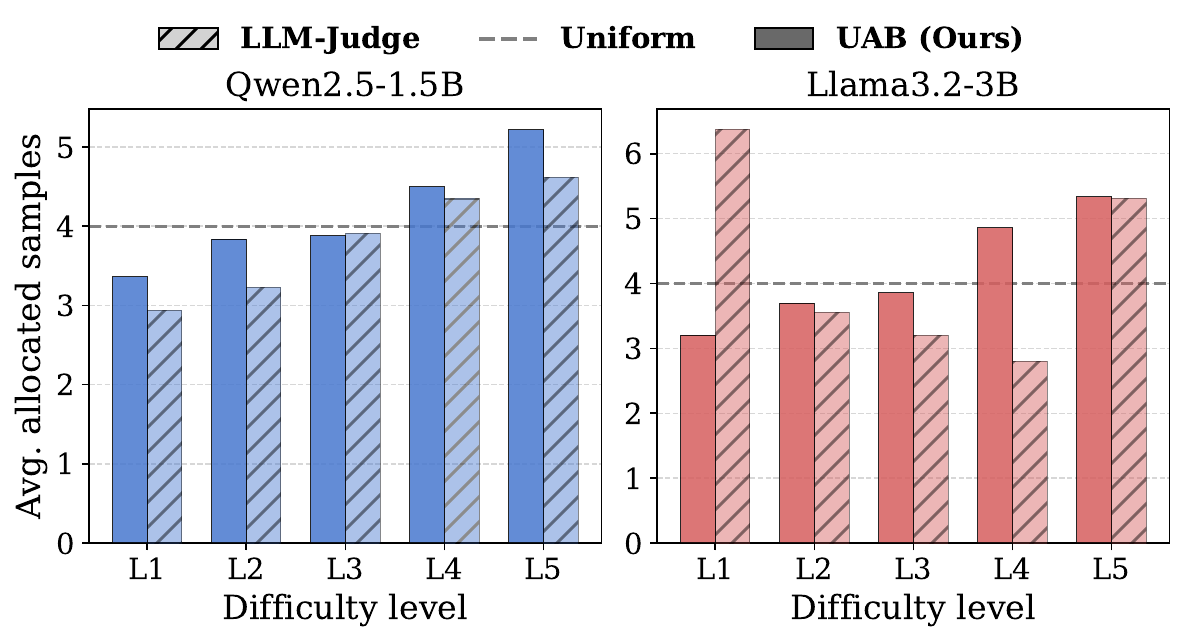}
    \caption{Average allocated samples vs.\ MATH-500 annotated difficulty level (L1--L5)
    for UAB (solid), LLM-Judge (dotted), and Uniform (dashed).}
    \label{fig:budget_level}
\end{figure}

Figure~\ref{fig:budget_alloc} visualises UAB's allocation on MATH-500 (Phase 1 + Phase 2) and validates
ANLL as a difficulty signal.
The left panel shows allocation rising monotonically across ANLL deciles: low-ANLL
questions receive ${\approx}3$ samples while the most uncertain receive nearly double,
confirming that the budget is redirected from easy to hard questions.
The right panel shows correctness decreasing monotonically with ANLL
(Pearson $r{\approx}{-}0.3$ across models), confirming ANLL provides
a useful difficulty ranking without any extra LLM call.
Figure~\ref{fig:budget_level} compares UAB, LLM-Judge, and Uniform across MATH-500
difficulty levels (L1 (easiest)--L5).
UAB allocation increases monotonically for both models; LLM-Judge's binary
classification produces irregular allocation and collapses for Llama3.2-3B,
where its labels are poorly correlated with actual model uncertainty.

\begin{figure*}[ht]
    \centering
    \includegraphics[width=\textwidth]{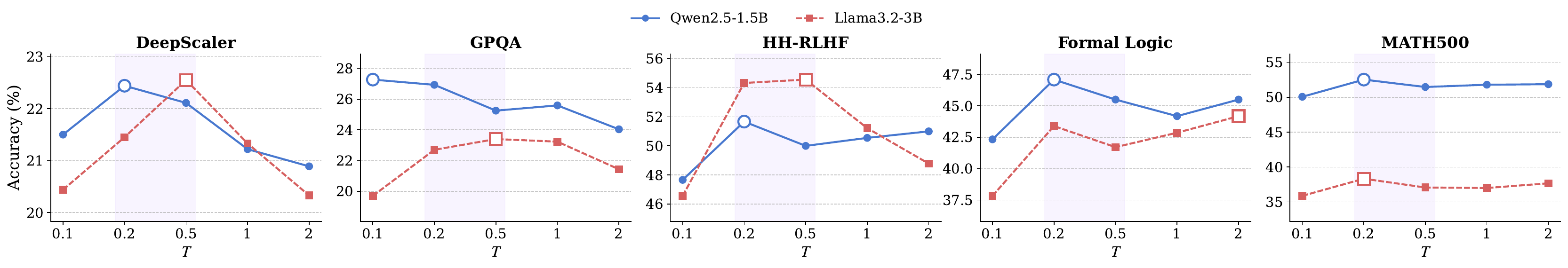}
    \caption{Accuracy vs.\ temperature $T \in \{0.1, 0.2, 0.5, 1, 2\}$ across all
    benchmarks and both models. 
    }
    \label{fig:temperature_ablation}
\end{figure*}

\paragraph{Temperature $T$.}
The temperature $T$ controls the sharpness of $p_i{=}e^{-s_i/T}$: small $T$
concentrates budget aggressively, while $T \to \infty$ recovers Uniform.
Figure~\ref{fig:temperature_ablation} shows that both models achieve best or near-best
accuracy within $T \in \{0.2, 0.5\}$ across all benchmarks.
Very small $T$ (${\leq}0.1$) over-concentrates budget while large $T$ (${\geq}1$)
erases the adaptive advantage.
We fix $T{=}0.2$ as the default for all experiments.

\paragraph{Threshold Exits.}
\label{sec:threshold_exits}

We further investigate whether UAB can improve its compute--accuracy trade-off by \emph{early exiting} questions that appear either too hard or already sufficiently confident after Phase~1.
Table~\ref{tab:threshold_main} compares plain UAB against two threshold-exit variants that gate Phase-2 sampling on the Phase-1 confidence estimate $p_i$: questions with $p_i < \theta_\text{hard}$ (deemed unsolvable) or $p_i > \theta_\text{easy}$ (deemed already confident) fall back to their Phase-1 generation.
Freed budget is either \emph{redistributed} to the remaining questions or \emph{skipped} entirely to save compute.
Best configurations, $\theta_\text{hard}{=}0.5$ (redistribute) and $\theta_\text{easy}{=}0.7$ (skip), are selected by the highest average accuracy across datasets.
On average, threshold exits do not improve accuracy.
The best hard-threshold configuration reduces average accuracy by $1.0\%$; the best
easy-threshold saves ${\approx}20\%$ inference budget at a cost of $-1.2\%$.
Qwen2.5-7B is an exception: its better calibration makes the easy-threshold
exit save $11.4\%$ budget at only $-0.3\%$.
Threshold exits offer a favourable accuracy--efficiency trade-off only when the
difficulty signal is well calibrated; we retain plain UAB as the default.

\begin{table}[t]
\centering
\caption{Ablation of the difficulty signal used by UAB ($N{=}4$).
\textbf{Bold} = best per column within each model block.
$\dagger$~our default.}
\label{tab:unc_unified}
\resizebox{\columnwidth}{!}{
\begin{tabular}{cl cc}
\toprule
\textbf{Model} & \textbf{Method} & \textbf{Formal Logic} & \textbf{MATH500} \\
\midrule
\multirow{8}{*}{\rotatebox{90}{{Qwen2.5-1.5B}}}
 & UAB w/o Uncertainty & 42.1 $\pm$ 1.5 & 49.0 $\pm$ 1.4 \\
 & UAB w/o Phase-1 vote          & 44.4 $\pm$ 0.8 & 47.3 $\pm$ 1.4 \\
\cmidrule(l){2-4}
 & Total NLL                 & 43.4 $\pm$ 1.7 & 44.4 $\pm$ 0.8 \\
 & Token Var                 & 46.6 $\pm$ 0.5 & 50.9 $\pm$ 0.6 \\
 & Max Token NLL             & 43.9 $\pm$ 0.5 & 51.1 $\pm$ 0.8 \\
\cmidrule(l){2-4}
 & VCS                       & 45.0 $\pm$ 0.4 & 51.2 $\pm$ 1.4 \\
 & Vote Entropy ($K{=}2$)    & 45.0 $\pm$ 1.2 & 52.3 $\pm$ 0.6 \\
\cmidrule(l){2-4}
 & \cellcolor{green!15}\textbf{UAB}$^\dagger$ (\textbf{ANLL})
 & \cellcolor{green!15}\textbf{47.1 $\pm$ 2.7}
 & \cellcolor{green!15}\textbf{52.5 $\pm$ 1.2} \\
\midrule
\multirow{8}{*}{\rotatebox{90}{{Llama3.2-3B}}}
 & UAB w/o Uncertainty & 41.3 $\pm$ 2.4 & 34.2 $\pm$ 1.5 \\
 & UAB w/o Phase-1 vote          & 38.1 $\pm$ 1.6 & 35.2 $\pm$ 1.4 \\
\cmidrule(l){2-4}
 & Total NLL                 & 36.2 $\pm$ 5.2 & 32.2 $\pm$ 2.1 \\
 & Token Var                 & 42.3 $\pm$ 1.2 & 37.3 $\pm$ 2.2 \\
 & Max Token NLL             & 40.0 $\pm$ 2.4 & 37.7 $\pm$ 1.5 \\
\cmidrule(l){2-4}
 & VCS                       & 42.3 $\pm$ 3.3 & 37.5 $\pm$ 2.6 \\
 & Vote Entropy ($K{=}2$)    & 40.5 $\pm$ 4.1 & \textbf{38.5 $\pm$ 1.0} \\
\cmidrule(l){2-4}
 & \cellcolor{green!15}\textbf{UAB}$^\dagger$ (\textbf{ANLL})
 & \cellcolor{green!15}\textbf{43.4 $\pm$ 3.4}
 & \cellcolor{green!15}38.3 $\pm$ 1.4 \\
\bottomrule
\end{tabular}
}
\end{table}

\paragraph{Uncertainty Metric.}

Table~\ref{tab:unc_unified} groups signals by source and inference cost. \textit{Component} ablates UAB itself, comparing \emph{UAB w/o Uncertainty}, which falls back to uniform allocation, against \emph{UAB w/o Phase-1 vote}, where Phase-1 samples are used only to estimate ANLL and excluded from the final vote. \textit{Logprob} groups three features extracted for free from a single Phase-1 forward pass: Total NLL, Token Var, and Max Token NLL. \textit{Uncertainty} groups two alternative signals: VCS, which appends a verbalized-confidence query to the Phase-1 prompt and replaces ANLL extraction at no extra cost; and Vote Entropy with $K{=}2$ Phase-1 samples per question, which halves the Phase-2 budget to $(N{-}2)M$ (see Appendix~\ref{apx:vote_entropy} for the full formulation).
Within \textit{Component}, \emph{UAB w/o Phase-1 vote} drops near or below Uniform on Llama3.2-3B, confirming Phase-1 samples contribute to the final vote.
Within \textit{Logprob}, Total NLL falls below Uniform by conflating uncertainty with length; length-aware Token Var and Max Token NLL beat Uniform but trail ANLL by 1--5 points, showing length normalization is essential.
Within \textit{Uncertainty}, neither expensive signal surpasses the free ANLL: VCS loses every cell, and Vote Entropy ties on MATH500 but trails by 2--3\% on Formal Logic, where its coarse binary signal is unsuitable for fine-grained allocation.
Overall, ANLL delivers the best accuracy at no extra inference cost.



\paragraph{Wall-Clock and Token Efficiency.}
Appendix Table~\ref{tab:wallclock} and~\ref{tab:token_stats} reports total wall-clock time at $N{=}4$ and per-response token statistics, respectively. Batching and parallelism keep all settings under 2.5 minutes, and UAB's two-phase dependency adds negligible overhead over Uniform, while LLM-Judge's classification stage is slower than the generation it replaces in most settings. On Qwen2.5-1.5B, UAB matches or slightly undercuts Uniform on MATH500 (long chain-of-thought) but is $14\%$ slower on Formal Logic, where shorter generations make the serial dependency more visible. Per-response token counts remain comparable to baselines across both models and datasets, confirming that wall-clock differences stem from scheduling rather than longer generations. Overall, these overheads are small and bounded, making UAB practical for batched inference.

\section{Conclusion}

We show \emph{how} a fixed inference budget is allocated matters as much as its size. Using per-token log-probabilities from the first generation as a zero-cost difficulty signal, UAB consistently outperforms baselines across six models and five benchmarks, with largest gains in low-resource settings. Its two-phase pipeline adds negligible wall-clock overhead and requires no auxiliary model or extra LLM call, making it a practical drop-in replacement for self-consistency in batched inference.

\section*{Limitations}

\textbf{Surrogate objective.} The optimization maximizes a coverage surrogate rather than majority-vote accuracy directly. The two objectives share the same concave marginal structure and align in the moderate-difficulty regime (Appendix~\ref{apx:surrogate_discussion}); a formal bound between them remains open.

\textbf{Independence assumption.} The Bernoulli model treats samples as i.i.d.; in practice, they share model-level biases. Proposition~\ref{prop:sensitivity} bounds the resulting degradation under small $\ell_\infty$ error in $\bm{p}$, but extreme intra-question correlation could invert the difficulty ordering.

\textbf{Temperature sensitivity.} While $T{=}0.2$ generalises well across our benchmarks, new model families or domains may require re-validation.

\textbf{Majority-vote aggregation.} Extension to open-ended tasks where majority voting is ill-defined (e.g., summarisation) is left to future work.

\section*{Ethical Considerations}
The primary evaluation uses open-weight models and publicly available datasets; the black-box setting (Section~\ref{sec:blackbox}) additionally accesses the Cohere Command-A API, whose terms of service and usage policies apply.
ANLL is a ranking signal derived from log-probabilities; its ordinal reliability may degrade out of distribution, and outputs should be verified before high-stakes deployment.
Source code will be released openly upon publication.

\section*{Acknowledgments}

\bibliography{references}

\appendix
\section{Appendix}
\label{apx:allocation_proofs}

\subsection{Proof of Proposition~\ref{prop:sensitivity} (Sensitivity Bound)}
\label{apx:proof:sensitivity}
Define $g(p; N) = 1 - (1-p)^N$ for $p \in [0,1]$ and $N \in \mathbb{Z}_+$.
For $N \geq 1$,
\[
  \left| \frac{\partial g}{\partial p} \right| = N(1-p)^{N-1} \leq N,
\]
so $g(\cdot; N)$ is $N$-Lipschitz in $p$ on $[0,1]$. Therefore,
\begin{align}
  |g(\hat{p}_i; N_i) - g(p_i; N_i)|
    &\leq N_i \cdot |\hat{p}_i - p_i| \notag\\
    &\leq N_i \cdot \|\hat{\bm{p}} - \bm{p}\|_\infty,
\end{align}
where the second inequality follows from the definition of the $\ell_\infty$ norm.
Summing over $i$ and using $\sum_i N_i = B$,
\begin{align}
  |J(\bm{N}; \hat{\bm{p}}) - J(\bm{N}; \bm{p})|
    &\leq \sum_{i=1}^M N_i \cdot \|\hat{\bm{p}} - \bm{p}\|_\infty \notag\\
    &= B \cdot \|\hat{\bm{p}} - \bm{p}\|_\infty.
\end{align}
For the corollary, write $\bm{p}^\star$ and $\hat{\bm{p}}$ for true and estimated
probabilities respectively, and let $J^\star(\bm{N}) = J(\bm{N}; \bm{p}^\star)$,
$\hat{J}(\bm{N}) = J(\bm{N}; \hat{\bm{p}})$.
Greedy is exact under $\hat{\bm{p}}$ (Appendix~\ref{apx:proof:optimality}), so
$\hat{J}(\hat{\bm{N}}) \geq \hat{J}(\bm{N}^\star)$. Therefore,
\begin{align}
J^\star(\bm{N}^\star) &- J^\star(\hat{\bm{N}}) \notag\\
  ={}& \bigl[J^\star(\bm{N}^\star) - \hat{J}(\bm{N}^\star)\bigr] \notag\\
  &+ \bigl[\hat{J}(\bm{N}^\star) - \hat{J}(\hat{\bm{N}})\bigr] \notag\\
  &+ \bigl[\hat{J}(\hat{\bm{N}}) - J^\star(\hat{\bm{N}})\bigr] \notag\\
  \leq{}& B\,\|\hat{\bm{p}} - \bm{p}^\star\|_\infty + 0 + B\,\|\hat{\bm{p}} - \bm{p}^\star\|_\infty \notag\\
  ={}& 2B \cdot \|\hat{\bm{p}} - \bm{p}^\star\|_\infty. \tag*{\qed}
\end{align}

\subsection{Marginal-Greedy Allocation: Optimality and Exactness}
\label{apx:proof:optimality}

For completeness, we restate the classical optimality condition for separable concave
knapsacks~\citep{fox1966discrete,federgruen1986greedy} specialized to our setting,
and verify that the marginal-greedy procedure inside Algorithm~\ref{alg:uab}
(lines 7--10) is exact.

\paragraph{Marginal gain and optimality condition.}
The marginal gain of allocating one additional sample to question $i$ given $n$ samples
already assigned is
\begin{align}
    \Delta_i(n) \;\triangleq\; p_i(1-p_i)^{n},
    \label{eq:delta}
\end{align}
which is strictly decreasing in $n$, ensuring diminishing returns.
Let $f_i(n) \triangleq 1 - (1-p_i)^n$, so 
$\Delta_i(n) = f_i(n+1) - f_i(n)$ and each $f_i$ is strictly concave on 
$\mathbb{Z}_{\geq 0}$. We show that an allocation $\bm{N}^*$ is optimal 
for~\eqref{eq:opt} if and only if there exists $\lambda^* > 0$ such that
\begin{equation}
    p_i\,(1-p_i)^{N_i^*-1} \;\geq\; \lambda^* \;\geq\; p_i\,(1-p_i)^{N_i^*}
    \quad \forall i,
    \label{eq:kkt}
\end{equation}
with the left inequality vacuous when $N_i^* = 0$.
Intuitively, $\lambda^*$ is the ``price'' of one budget unit: the optimal policy allocates
a sample to question $i$ only while $\Delta_i(N_i)$ exceeds $\lambda^*$.

\paragraph{Necessity.}
Suppose $\bm{N}^*$ is optimal but~\eqref{eq:kkt} fails.
Then there exist $i \neq j$ with $N_j^* > 0$ such that
$\Delta_i(N_i^*) > \Delta_j(N_j^* - 1)$.
Let $\tilde{\bm{N}}$ be the allocation with $\tilde{N}_i = N_i^* + 1$,
$\tilde{N}_j = N_j^* - 1$, $\tilde{N}_k = N_k^*$ for $k \neq i,j$.
This is feasible (same total, $\tilde{N}_j \geq 0$).
Since only indices $i$ and $j$ differ between $\tilde{\bm{N}}$ and $\bm{N}^*$,
all other terms cancel; using $\Delta_i(n) = f_i(n+1)-f_i(n)$:
\begin{align}
&\sum_k f_k(\tilde{N}_k)
    - \sum_k f_k(N_k^*)
\notag\\
&=
\bigl[f_i(N_i^*+1) - f_i(N_i^*)\bigr]
\notag\\
&\quad+
\bigl[f_j(N_j^*-1) - f_j(N_j^*)\bigr]
\notag\\
&=
\Delta_i(N_i^*)
-
\Delta_j(N_j^*-1)
\;>\; 0.
\label{eq:nec_exchange}
\end{align}
This contradicts optimality of $\bm{N}^*$.
Iterating the exchange argument until no improving swap exists yields~\eqref{eq:kkt}
for some $\lambda^*$.

\paragraph{Sufficiency.}
Suppose~\eqref{eq:kkt} holds for some $\lambda^* > 0$.
For any feasible $\bm{N}$ and each question $i$, the telescoping identity
(empty sums equal zero) gives:
\begin{align}
    f_i(N_i) - f_i(N_i^*)
    &= \sum_{n=N_i^*}^{N_i-1}\!\Delta_i(n)
       \;-\; \sum_{n=N_i}^{N_i^*-1}\!\Delta_i(n).
    \label{eq:telescope}
\end{align}
To see this: if $N_i > N_i^*$, the first sum telescopes $f_i(N_i^*) \to f_i(N_i)$
and the second is empty; if $N_i < N_i^*$, the second sum telescopes
$f_i(N_i) \to f_i(N_i^*)$ with a sign flip and the first is empty;
if $N_i = N_i^*$, both are empty.
Applying the KKT bounds~\eqref{eq:kkt} to each nonzero sum in~\eqref{eq:telescope}:
since $\Delta_i(n) \leq \lambda^*$ for $n \geq N_i^*$, the first sum satisfies
$\sum_{n=N_i^*}^{N_i-1}\Delta_i(n) \leq \lambda^*(N_i{-}N_i^*)$ when $N_i > N_i^*$;
since $\Delta_i(n) \geq \lambda^*$ for $n < N_i^*$, the second sum satisfies
$-\sum_{n=N_i}^{N_i^*-1}\Delta_i(n) \leq \lambda^*(N_i{-}N_i^*)$ when $N_i < N_i^*$.
In both cases:
\begin{align}
    f_i(N_i) - f_i(N_i^*) \;\leq\; \lambda^*\,(N_i - N_i^*).
    \label{eq:perq_bound}
\end{align}
Summing~\eqref{eq:perq_bound} over all $i$:
\begin{align}
\sum_i f_i(N_i)
    - \sum_i f_i(N_i^*)
&\leq
\lambda^*
\sum_i (N_i - N_i^*)
\notag\\
&=
0,
\end{align}
where the equality uses $\sum_i N_i = \sum_i N_i^* = B$.
Hence $f(\bm{N}^*) \geq f(\bm{N})$ for all feasible $\bm{N}$, so $\bm{N}^*$ is optimal. \qed

\paragraph{Exactness of the greedy procedure.}
The marginal-greedy loop in Algorithm~\ref{alg:uab} (lines 7--10) assigns each budget
unit to $\arg\max_i \Delta_i(e_i)$ until $B_{\mathrm{eff}}$ units are placed.
We show this returns an exact optimal solution to~\eqref{eq:opt} in
$O(B_{\mathrm{eff}} \cdot M)$ time.


Let $\bm{N}^G$ be the greedy output. Decompose the objective as a sum of marginal gains:
\begin{align}
    \sum_i f_i(N_i^G)
    &= \sum_i f_i(0) + \sum_{t=1}^{B_{\mathrm{eff}}} g_t, \notag\\
    g_t &= \Delta_{i^*_t}\!\bigl(N_{i^*_t}^{(t-1)}\bigr),
    \label{eq:greedy_decomp}
\end{align}
where $i^*_t$ is the index chosen at step $t$.
By the greedy rule, $g_t = \max_i \Delta_i(N_i^{(t-1)})$.

For any feasible $\bm{N}^*$, write its allocation as $B_{\mathrm{eff}}$ units
with marginal gains $h_1,\ldots,h_{B_{\mathrm{eff}}}$ in non-increasing order.
Since $\Delta_i(n)$ is decreasing in $n$, greedy always selects the globally largest
available marginal, so $g_t \geq h_t$ for all
$t$~\citep{fox1966discrete,federgruen1986greedy}.
Hence $\sum_t g_t \geq \sum_t h_t$, i.e., the greedy objective value is at least that
of any feasible allocation, so $\bm{N}^G$ is optimal.
The procedure performs $B_{\mathrm{eff}}$ iterations, each an $\arg\max$ over $M$
values, giving total complexity $O(B_{\mathrm{eff}} \cdot M)$.

\subsection{Discussion: Coverage Surrogate, Majority Vote, and the i.i.d.\ Assumption}
\label{apx:surrogate_discussion}

\paragraph{Coverage surrogate vs.\ majority-vote accuracy.}
Algorithm~\ref{alg:uab} maximizes coverage (probability of at least one correct sample), while evaluation uses majority vote. The two objectives can diverge: extra samples on a low-probability question may dilute a correct minority, but they share the same concave marginal-return structure under which allocating more samples to harder questions helps. Intuitively, each additional sample on a given question yields diminishing improvement, so the marginal gain from sampling a near-saturated easy question is smaller than from a harder one with more headroom. We treat the coverage objective as a tractable proxy and validate empirically that coverage-optimal allocation consistently improves majority-vote accuracy across benchmarks.

\paragraph{I.i.d.\ approximation and intra-question correlation.}
The Bernoulli model assumes independent correctness across samples from the same
prompt. In practice, same-prompt samples share systematic biases (shared reasoning
errors, consistent misinterpretations), introducing positive within-question
correlation that causes the i.i.d.\ model to overestimate the benefit of extra
samples. However, this overestimation applies \emph{uniformly} across questions:
since UAB (marginal-greedy allocation) depends only on the \emph{ordering} of marginal gains, a
shared bias preserves the ranking of difficulty scores and does not alter which
questions receive more budget. 
In practice, correlation strength varies across questions: prompts that induce more diverse generations exhibit weaker within-question correlation than those prone to mode collapse. This heterogeneity can perturb the ranking, but only when two questions have similar difficulty and substantially different correlation levels; in that regime the allocation decision is near-indifferent and the resulting loss is small.
When this ordering is only approximately preserved,
Proposition~\ref{prop:sensitivity} ensures that small estimation errors translate
to small losses in allocation quality.

\subsection{Additional Implementation Details}
\label{app:implementation}

We follow the inference infrastructure from prior work~\citep{choi2025debate,
nguyen2026hear}.
All models are served via \texttt{vLLM} with sampling temperature $0.9$, and \texttt{max\_new\_tokens}${=}1024$.
Majority-vote aggregation extracts the final boxed/selected answer from each
generation using a rule-based parser; ties are broken by the lexicographically
earliest answer.
All experiments are conducted on a single H100 (80 GB) GPU.

\paragraph{LLM-Judge prompt.}
The judge uses the same backbone model as the generator with the following zero-shot
prompt:

\begin{tcolorbox}[colback=gray!8, colframe=gray!40, boxrule=0.5pt,
                  left=6pt, right=6pt, top=4pt, bottom=4pt]
\small
\textit{Is the following question easy or hard for a language model to answer
correctly? Respond with a single word: \texttt{easy} or \texttt{hard}.}

\medskip
\texttt{[question text]}
\end{tcolorbox}

\noindent
Questions labelled \texttt{easy} receive $c_{\min}=1$ sample; questions labelled
\texttt{hard} divide the remaining budget $(N - c_{\min}) \times M_{\text{hard}}$
equally, with any remainder distributed round-robin.
The classification is done in a single batched \texttt{vLLM} call before Phase~1.

\paragraph{VCS prompt.}
\label{app:vcs_prompt}
In the black-box setting, the confidence-elicitation instruction is appended directly
to the question before Phase-1 generation, so the model produces its answer and
confidence rating in a single call:

\begin{tcolorbox}[colback=gray!8, colframe=gray!40, boxrule=0.5pt,
                  left=6pt, right=6pt, top=4pt, bottom=4pt]
\small
\texttt{[question text]}

\medskip
\textit{After giving your answer, rate how confident you are that it is correct
on a scale from \texttt{1} (not confident at all) to \texttt{10} (completely certain).
End your response with: \texttt{Confidence: <integer>}.}
\end{tcolorbox}

\noindent
The integer parsed from \texttt{Confidence:} is normalised to $[0,1]$ by dividing
by $10$ and used directly as the per-sample success probability $p_i$ in Algorithm~\ref{alg:uab} ($p_i{=}\text{score}/10$).

\subsection{Vote Entropy: Formulation and Choice of $K$}
\label{apx:vote_entropy}

Vote Entropy~\citep{argamon1999committee} quantifies disagreement among multiple stochastic samples from the model. Given a question $x_i$, we draw $K$ independent Phase-1 samples, parse each into a final answer, and compute the Shannon entropy of the empirical answer distribution: $H(x_i) = -\sum_{v \in \mathcal{V}_i} q_v \ln q_v$, where $q_v$ is the empirical frequency of answer $v$ among the $K$ samples. Higher $H(x_i)$ indicates greater disagreement and is used as a difficulty signal for budget allocation~\citep{kuhn2023semantic, lin2024generating, nguyen2025probabilities}. With $K{=}2$, only two outcomes are possible: agreement ($H = 0$) or disagreement ($H = \ln 2$). The signal therefore collapses to a binary $\{0, \ln 2\}$ value, preventing fine-grained ranking among uncertain questions. Increasing $K$ refines the signal but consumes more of the fixed budget $B = N \times M$, leaving only $(N{-}K)M$ samples for Phase-2. At $N{=}4$, $K{=}3$ leaves just $M$ samples for Phase-2, removing the room for difficulty-aware allocation. We therefore adopt $K{=}2$ as the minimum-cost setting that still yields a non-trivial entropy signal.

\begin{table}[t]
\centering
\caption{Wall-clock time (s) per full-dataset run at budget $N{=}4$. \textbf{Bold} = fastest per column. $\dagger$~our method.}
\label{tab:wallclock}
\resizebox{\columnwidth}{!}{
\begin{tabular}{lrrrr}
\toprule
\textbf{Method} & \multicolumn{2}{c}{\textbf{Formal Logic}} & \multicolumn{2}{c}{\textbf{MATH500}} \\
\cmidrule(lr){2-3}\cmidrule(lr){4-5}
 & Qwen2.5-1.5B & Llama3.2-3B & Qwen2.5-1.5B & Llama3.2-3B \\
\midrule
Random & 45\,\textpm\,5 & 54\,\textpm\,5 & 139\,\textpm\,5 & 111\,\textpm\,1 \\
Length & 45\,\textpm\,3 & 52\,\textpm\,1 & 145\,\textpm\,1 & 118\,\textpm\,1 \\
LLM-Judge & 43\,\textpm\,1 & 56\,\textpm\,1 & 144\,\textpm\,1 & 123\,\textpm\,1 \\
Uniform & \textbf{43\,\textpm\,1} & \textbf{51\,\textpm\,1} & 136\,\textpm\,1 & 110\,\textpm\,1 \\
\cellcolor{green!15}\textbf{UAB}$^\dagger$ & \cellcolor{green!15}49\,\textpm\,1 & \cellcolor{green!15}54\,\textpm\,0 & \cellcolor{green!15}\textbf{134\,\textpm\,3} & \cellcolor{green!15}\textbf{109\,\textpm\,2} \\
\bottomrule
\end{tabular}
}
\end{table}
\begin{table}[t]
\centering
\caption{Average per-response token consumption (input / output / total) across all methods, seed~42. \textbf{Bold} = lowest total per column. $\dagger$~our method.}
\label{tab:token_stats}
\resizebox{\columnwidth}{!}{
\begin{tabular}{llrrrrrr}
\toprule
\textbf{Model} & \textbf{Method} & \multicolumn{3}{c}{\textbf{Formal Logic}} & \multicolumn{3}{c}{\textbf{MATH500}} \\
\cmidrule(lr){3-5}\cmidrule(lr){6-8}
 & & In & Out & Total & In & Out & Total \\
\midrule
\multirow{5}{*}{Qwen2.5-1.5B} & Random & 161 & 132 & \textbf{293} & 139 & 508 & \textbf{647} \\
 & Length & 161 & 132 & 293 & 139 & 508 & 647 \\
 & Uniform & 161 & 151 & 312 & 139 & 510 & 649 \\
 & LLM-Judge & 161 & 132 & 293 & 139 & 508 & 647 \\
 & \cellcolor{green!15}\textbf{UAB$^\dagger$} & \cellcolor{green!15}161 & \cellcolor{green!15}151 & \cellcolor{green!15}312 & \cellcolor{green!15}139 & \cellcolor{green!15}515 & \cellcolor{green!15}654 \\
\midrule
\multirow{5}{*}{Llama3.2-3B} & Random & 167 & 267 & \textbf{435} & 143 & 345 & \textbf{488} \\
 & Length & 167 & 267 & 435 & 143 & 345 & 488 \\
 & Uniform & 167 & 279 & 446 & 143 & 354 & 497 \\
 & LLM-Judge & 167 & 268 & 435 & 143 & 348 & 491 \\
 & \cellcolor{green!15}\textbf{UAB$^\dagger$} & \cellcolor{green!15}167 & \cellcolor{green!15}288 & \cellcolor{green!15}456 & \cellcolor{green!15}143 & \cellcolor{green!15}348 & \cellcolor{green!15}490 \\
\bottomrule
\end{tabular}
}
\end{table}


\begin{table*}[h]
\centering
\caption{Benchmark details. ``Task'' describes the evaluation format; ``Size'' is the
number of test questions used in our experiments.}
\label{tab:datasets}
\resizebox{\textwidth}{!}{
\begin{tabular}{llrl}
\toprule
\textbf{Benchmark} & \textbf{Task} & \textbf{Size} & \textbf{HuggingFace path} \\
\midrule
DeepScaler & Math (Open) & 300 & \texttt{agentica-org/DeepScaleR-Preview-Dataset} \\
GPQA Diamond & Science (MC) & 198 & \texttt{Idavidrein/gpqa} \\
HH-RLHF & Preference (MC) & 300 & \texttt{Anthropic/hh-rlhf} \\
MATH500 & Math (Open) & 500 & \texttt{HuggingFaceH4/MATH-500} \\
MMLU Formal Logic & Deductive (MC) & 126 & \texttt{cais/mmlu} \\
\bottomrule
\end{tabular}
}
\end{table*}


\begin{table*}[t]
\centering
\setlength{\tabcolsep}{4pt}
\caption{Full accuracy (\%) at $N{=}2$ across all benchmarks. \textbf{Bold} = leads runner-up by ${\geq}1$ SEM; \underline{underline} = numerically best otherwise. $\dagger$~our method.}
\label{tab:appendix_scaling_N2}
\resizebox{\textwidth}{!}{%
\begin{tabular}{llcccccc}
\toprule
\textbf{Model} & \textbf{Method} & \textbf{DeepScaler} & \textbf{GPQA} & \textbf{HH-RLHF} & \textbf{Formal Logic} & \textbf{MATH500} & \textbf{Avg} \\
\midrule
\multirow{5}{*}{Qwen2.5-1.5B} & Random & 17.7 $\pm$ 3.5 & \underline{24.0 $\pm$ 2.5} & 49.9 $\pm$ 3.9 & 41.9 $\pm$ 1.6 & 42.1 $\pm$ 1.4 & 35.1 \\
 & Length & 18.2 $\pm$ 2.2 & 22.2 $\pm$ 1.5 & 46.0 $\pm$ 4.6 & 42.4 $\pm$ 1.7 & 43.3 $\pm$ 2.1 & 34.4 \\
 & Uniform & 17.7 $\pm$ 1.9 & 23.1 $\pm$ 2.8 & 49.3 $\pm$ 3.1 & 41.5 $\pm$ 2.9 & \underline{44.1 $\pm$ 2.2} & 35.1 \\
 & LLM-Judge & 18.1 $\pm$ 1.2 & 23.4 $\pm$ 2.8 & 47.3 $\pm$ 1.5 & \underline{43.4 $\pm$ 0.8} & 42.2 $\pm$ 2.3 & 34.9 \\
 & \cellcolor{green!15}\textbf{UAB$^\dagger$} & \cellcolor{green!15}\textbf{20.1 $\pm$ 0.8} & \cellcolor{green!15}23.5 $\pm$ 2.4 & \cellcolor{green!15}\textbf{52.1 $\pm$ 1.9} & \cellcolor{green!15}43.4 $\pm$ 1.9 & \cellcolor{green!15}42.7 $\pm$ 1.2 & \cellcolor{green!15}\textbf{36.4} \\
\midrule
\multirow{5}{*}{Llama3.2-3B} & Random & \textbf{17.8 $\pm$ 0.5} & 20.0 $\pm$ 2.5 & 51.2 $\pm$ 3.7 & 37.4 $\pm$ 2.8 & 31.4 $\pm$ 0.4 & 31.5 \\
 & Length & 16.9 $\pm$ 1.6 & 15.9 $\pm$ 2.0 & 49.9 $\pm$ 2.5 & 38.7 $\pm$ 1.6 & 32.1 $\pm$ 1.4 & 30.7 \\
 & Uniform & 15.5 $\pm$ 0.7 & 19.9 $\pm$ 3.8 & 50.6 $\pm$ 3.4 & \textbf{40.3 $\pm$ 1.7} & 30.7 $\pm$ 0.9 & 31.4 \\
 & LLM-Judge & 16.1 $\pm$ 1.1 & 19.2 $\pm$ 2.8 & \underline{52.8 $\pm$ 4.3} & 37.1 $\pm$ 2.9 & 31.0 $\pm$ 0.5 & 31.2 \\
 & \cellcolor{green!15}\textbf{UAB$^\dagger$} & \cellcolor{green!15}16.9 $\pm$ 1.6 & \cellcolor{green!15}\textbf{23.4 $\pm$ 3.3} & \cellcolor{green!15}49.9 $\pm$ 1.3 & \cellcolor{green!15}38.9 $\pm$ 2.2 & \cellcolor{green!15}\textbf{33.5 $\pm$ 1.6} & \cellcolor{green!15}\textbf{32.5} \\
\midrule
\multirow{5}{*}{Qwen2.5-7B} & Random & 32.4 $\pm$ 3.1 & 32.7 $\pm$ 3.0 & 50.1 $\pm$ 1.5 & 56.1 $\pm$ 4.2 & 65.3 $\pm$ 1.0 & 47.3 \\
 & Length & 33.0 $\pm$ 2.6 & 32.5 $\pm$ 2.5 & 50.6 $\pm$ 1.0 & 54.6 $\pm$ 1.6 & 64.5 $\pm$ 0.1 & 47.0 \\
 & Uniform & 28.9 $\pm$ 2.0 & 31.5 $\pm$ 3.7 & 50.1 $\pm$ 2.8 & 55.5 $\pm$ 3.2 & 65.1 $\pm$ 0.9 & 46.2 \\
 & LLM-Judge & 31.1 $\pm$ 1.5 & 32.8 $\pm$ 1.3 & 49.6 $\pm$ 1.3 & 56.4 $\pm$ 4.9 & 64.0 $\pm$ 1.9 & 46.8 \\
 & \cellcolor{green!15}\textbf{UAB$^\dagger$} & \cellcolor{green!15}\underline{33.1 $\pm$ 1.9} & \cellcolor{green!15}\underline{32.9 $\pm$ 2.3} & \cellcolor{green!15}\textbf{52.7 $\pm$ 2.3} & \cellcolor{green!15}\underline{56.5 $\pm$ 2.9} & \cellcolor{green!15}\textbf{66.2 $\pm$ 0.8} & \cellcolor{green!15}\textbf{48.3} \\
\bottomrule
\end{tabular}
}
\end{table*}
\begin{table*}[t]
\centering
\setlength{\tabcolsep}{4pt}
\caption{Full accuracy (\%) at $N{=}8$ across all benchmarks. \textbf{Bold} = leads runner-up by ${\geq}1$ SEM; \underline{underline} = numerically best otherwise. $\dagger$~our method.}
\label{tab:appendix_scaling_N8}
\resizebox{\textwidth}{!}{%
\begin{tabular}{llcccccc}
\toprule
\textbf{Model} & \textbf{Method} & \textbf{DeepScaler} & \textbf{GPQA} & \textbf{HH-RLHF} & \textbf{Formal Logic} & \textbf{MATH500} & \textbf{Avg} \\
\midrule
\multirow{5}{*}{Qwen2.5-1.5B} & Random & 21.2 $\pm$ 0.6 & 27.1 $\pm$ 1.3 & 48.5 $\pm$ 2.1 & 44.0 $\pm$ 0.5 & 52.0 $\pm$ 1.8 & 38.6 \\
 & Length & 22.7 $\pm$ 0.6 & 26.3 $\pm$ 0.9 & \underline{52.8 $\pm$ 1.4} & 43.6 $\pm$ 1.6 & 53.3 $\pm$ 1.6 & 39.7 \\
 & Uniform & 24.7 $\pm$ 1.2 & 24.4 $\pm$ 2.4 & 51.3 $\pm$ 3.2 & 45.2 $\pm$ 2.1 & 54.7 $\pm$ 1.0 & 40.1 \\
 & LLM-Judge & 22.8 $\pm$ 1.8 & 25.4 $\pm$ 3.7 & 50.8 $\pm$ 1.7 & 44.7 $\pm$ 0.9 & 53.0 $\pm$ 1.8 & 39.3 \\
 & \cellcolor{green!15}\textbf{UAB$^\dagger$} & \cellcolor{green!15}\underline{24.9 $\pm$ 1.2} & \cellcolor{green!15}\underline{27.3 $\pm$ 2.0} & \cellcolor{green!15}52.1 $\pm$ 3.0 & \cellcolor{green!15}\underline{46.2 $\pm$ 2.1} & \cellcolor{green!15}\textbf{55.9 $\pm$ 0.7} & \cellcolor{green!15}\textbf{41.3} \\
\midrule
\multirow{5}{*}{Llama3.2-3B} & Random & 20.5 $\pm$ 2.3 & 20.7 $\pm$ 0.5 & 51.4 $\pm$ 3.4 & 40.0 $\pm$ 5.6 & 37.9 $\pm$ 0.7 & 34.1 \\
 & Length & 22.7 $\pm$ 0.9 & 23.8 $\pm$ 0.9 & 52.7 $\pm$ 3.7 & \underline{45.2 $\pm$ 3.5} & 40.0 $\pm$ 0.4 & 36.9 \\
 & Uniform & 22.8 $\pm$ 1.4 & \underline{24.4 $\pm$ 1.8} & 52.9 $\pm$ 2.8 & 42.9 $\pm$ 2.4 & \textbf{43.0 $\pm$ 0.2} & 37.2 \\
 & LLM-Judge & 15.8 $\pm$ 1.6 & 22.2 $\pm$ 1.0 & 48.9 $\pm$ 3.7 & 37.3 $\pm$ 3.2 & 33.1 $\pm$ 1.4 & 31.5 \\
 & \cellcolor{green!15}\textbf{UAB$^\dagger$} & \cellcolor{green!15}\underline{23.1 $\pm$ 1.1} & \cellcolor{green!15}22.6 $\pm$ 2.1 & \cellcolor{green!15}\underline{53.9 $\pm$ 3.3} & \cellcolor{green!15}44.7 $\pm$ 2.2 & \cellcolor{green!15}42.5 $\pm$ 0.8 & \cellcolor{green!15}\textbf{37.3} \\
\midrule
\multirow{5}{*}{Qwen2.5-7B} & Random & 37.8 $\pm$ 1.0 & 37.2 $\pm$ 4.7 & 51.6 $\pm$ 1.6 & 58.6 $\pm$ 2.4 & 69.4 $\pm$ 1.0 & 50.9 \\
 & Length & 30.2 $\pm$ 1.4 & 36.5 $\pm$ 2.0 & \underline{53.3 $\pm$ 1.5} & 56.6 $\pm$ 0.9 & 71.5 $\pm$ 0.8 & 49.6 \\
 & Uniform & 42.6 $\pm$ 1.0 & \textbf{37.9 $\pm$ 1.0} & 52.2 $\pm$ 2.5 & 60.6 $\pm$ 0.9 & 71.7 $\pm$ 0.5 & \textbf{53.0} \\
 & LLM-Judge & 38.8 $\pm$ 1.1 & 35.7 $\pm$ 1.6 & 52.7 $\pm$ 2.6 & 60.3 $\pm$ 2.1 & 71.4 $\pm$ 0.9 & 51.8 \\
 & \cellcolor{green!15}\textbf{UAB$^\dagger$} & \cellcolor{green!15}\underline{43.3 $\pm$ 1.7} & \cellcolor{green!15}35.9 $\pm$ 2.6 & \cellcolor{green!15}52.0 $\pm$ 2.3 & \cellcolor{green!15}\textbf{61.7 $\pm$ 1.5} & \cellcolor{green!15}\textbf{72.3 $\pm$ 0.8} & \cellcolor{green!15}53.0 \\
\bottomrule
\end{tabular}
}
\end{table*}

\begin{table*}[t]
\centering
\setlength{\tabcolsep}{4pt}
\caption{Accuracy (\%) at each budget $N \in \{1, 2, 4, 6, 8, 12, 16\}$ for the two ablation models and datasets. Results at $N \in \{2, 4, 8\}$ are from the main experiments; $N \in \{1, 6, 12\}$ are additional ablation runs.}
\label{tab:cost_accuracy_appendix}
\resizebox{\textwidth}{!}{
\begin{tabular}{lcccccccccccccc}
\toprule
\textbf{Method} & \multicolumn{7}{c}{\textbf{Formal Logic}} & \multicolumn{7}{c}{\textbf{MATH500}} \\
\cmidrule(lr){2-8} \cmidrule(lr){9-15}
 & $N{=}1$ & $N{=}2$ & $N{=}4$ & $N{=}6$ & $N{=}8$ & $N{=}12$ & $N{=}16$ & $N{=}1$ & $N{=}2$ & $N{=}4$ & $N{=}6$ & $N{=}8$ & $N{=}12$ & $N{=}16$ \\
\midrule
\multicolumn{15}{c}{\textbf{Qwen2.5-1.5B}} \\
\midrule
Random & 41.7 & 41.9 $\pm$ 1.6 & 40.9 $\pm$ 0.5 & 43.7 $\pm$ 2.1 & 44.0 $\pm$ 0.5 & \textbf{46.0 $\pm$ 0.0} & 44.0 $\pm$ 1.2 & 44.6 & 42.1 $\pm$ 1.4 & 47.2 $\pm$ 1.5 & 52.1 $\pm$ 0.5 & 52.0 $\pm$ 1.8 & 54.1 $\pm$ 1.8 & 53.9 $\pm$ 1.4 \\
Length & 41.7 & 42.4 $\pm$ 1.7 & 43.4 $\pm$ 3.7 & \textbf{45.5 $\pm$ 1.2} & 43.6 $\pm$ 1.6 & 43.7 $\pm$ 2.9 & 44.2 $\pm$ 0.5 & 44.6 & 43.3 $\pm$ 2.1 & 48.3 $\pm$ 0.7 & 51.7 $\pm$ 1.2 & 53.3 $\pm$ 1.6 & 55.3 $\pm$ 0.8 & 58.0 $\pm$ 1.7 \\
Uniform & 41.7 & 41.5 $\pm$ 2.9 & 42.1 $\pm$ 1.5 & 43.9 $\pm$ 1.2 & 45.2 $\pm$ 2.1 & 43.6 $\pm$ 0.8 & \textbf{45.8 $\pm$ 0.9} & 44.6 & \textbf{44.1 $\pm$ 2.2} & 49.0 $\pm$ 1.4 & 53.3 $\pm$ 1.4 & 54.7 $\pm$ 1.0 & 56.3 $\pm$ 0.3 & \textbf{59.5 $\pm$ 1.8} \\
LLM-Judge & 41.7 & \textbf{43.4 $\pm$ 0.8} & 45.8 $\pm$ 1.8 & 42.3 $\pm$ 1.7 & 44.7 $\pm$ 0.9 & 44.2 $\pm$ 3.2 & 44.7 $\pm$ 0.9 & 44.6 & 42.2 $\pm$ 2.3 & 48.1 $\pm$ 1.2 & 53.5 $\pm$ 0.3 & 53.0 $\pm$ 1.8 & 54.6 $\pm$ 1.2 & 55.4 $\pm$ 2.6 \\
\cellcolor{green!15}\textbf{UAB$^\dagger$} & \cellcolor{green!15}41.7 & \cellcolor{green!15}\textbf{43.4 $\pm$ 1.9} & \cellcolor{green!15}\textbf{47.1 $\pm$ 2.7} & \cellcolor{green!15}44.0 $\pm$ 2.1 & \cellcolor{green!15}\textbf{46.2 $\pm$ 2.1} & \cellcolor{green!15}43.7 $\pm$ 1.9 & \cellcolor{green!15}43.9 $\pm$ 1.1 & \cellcolor{green!15}44.6 & \cellcolor{green!15}42.7 $\pm$ 1.2 & \cellcolor{green!15}\textbf{52.5 $\pm$ 1.2} & \cellcolor{green!15}\textbf{54.5 $\pm$ 1.5} & \cellcolor{green!15}\textbf{55.9 $\pm$ 0.7} & \cellcolor{green!15}\textbf{57.6 $\pm$ 1.4} & \cellcolor{green!15}58.3 $\pm$ 1.0 \\
\midrule
\midrule
\multicolumn{15}{c}{\textbf{Llama3.2-3B}} \\
\midrule
Random & 37.2 & 37.4 $\pm$ 2.8 & 38.8 $\pm$ 2.7 & 39.7 $\pm$ 6.3 & 40.0 $\pm$ 5.6 & 41.0 $\pm$ 4.1 & 42.9 $\pm$ 5.0 & 31.4 & 31.4 $\pm$ 0.4 & 34.2 $\pm$ 0.3 & 38.7 $\pm$ 1.5 & 37.9 $\pm$ 0.7 & 41.7 $\pm$ 0.6 & 41.7 $\pm$ 1.4 \\
Length & 37.2 & 38.7 $\pm$ 1.6 & 39.7 $\pm$ 2.1 & \textbf{43.4 $\pm$ 1.2} & \textbf{45.2 $\pm$ 3.5} & \textbf{45.2 $\pm$ 4.0} & \textbf{44.4 $\pm$ 2.1} & 31.4 & 32.1 $\pm$ 1.4 & 35.5 $\pm$ 0.3 & 36.7 $\pm$ 0.5 & 40.0 $\pm$ 0.4 & 43.3 $\pm$ 0.9 & 44.5 $\pm$ 0.6 \\
Uniform & 37.2 & \textbf{40.3 $\pm$ 1.7} & 41.3 $\pm$ 2.4 & 41.8 $\pm$ 2.4 & 42.9 $\pm$ 2.4 & 44.2 $\pm$ 2.6 & 43.1 $\pm$ 1.7 & 31.4 & 30.7 $\pm$ 0.9 & 34.2 $\pm$ 1.5 & \textbf{41.5 $\pm$ 0.7} & \textbf{43.0 $\pm$ 0.2} & 43.2 $\pm$ 1.6 & \textbf{45.6 $\pm$ 0.0} \\
LLM-Judge & 37.2 & 37.1 $\pm$ 2.9 & 39.2 $\pm$ 4.1 & 38.9 $\pm$ 1.4 & 37.3 $\pm$ 3.2 & 42.1 $\pm$ 0.8 & 39.2 $\pm$ 3.0 & 31.4 & 31.0 $\pm$ 0.5 & 30.9 $\pm$ 0.5 & 30.9 $\pm$ 1.1 & 33.1 $\pm$ 1.4 & 32.0 $\pm$ 1.6 & 31.5 $\pm$ 0.2 \\
\cellcolor{green!15}\textbf{UAB$^\dagger$} & \cellcolor{green!15}37.2 & \cellcolor{green!15}38.9 $\pm$ 2.2 & \cellcolor{green!15}\textbf{43.4 $\pm$ 3.4} & \cellcolor{green!15}39.9 $\pm$ 3.3 & \cellcolor{green!15}44.7 $\pm$ 2.2 & \cellcolor{green!15}44.9 $\pm$ 3.0 & \cellcolor{green!15}\textbf{44.4 $\pm$ 1.8} & \cellcolor{green!15}31.4 & \cellcolor{green!15}\textbf{33.5 $\pm$ 1.6} & \cellcolor{green!15}\textbf{38.3 $\pm$ 1.4} & \cellcolor{green!15}40.1 $\pm$ 1.2 & \cellcolor{green!15}42.5 $\pm$ 0.8 & \cellcolor{green!15}\textbf{45.5 $\pm$ 1.1} & \cellcolor{green!15}45.1 $\pm$ 1.0 \\
\bottomrule
\end{tabular}
}
\end{table*}

\begin{table*}[t]
\centering
\setlength{\tabcolsep}{3pt}
\caption{Full threshold-exit ablation at $N{=}4$. \textbf{Exit}: type of threshold applied (None = plain UAB). \textbf{Mode}: action on triggered questions (Redist.\ = redistribute saved budget; Skip = skip entirely). $\boldsymbol{\theta}$: threshold value. \textbf{Bold} = best per model--dataset group (including UAB None baseline). \% Saved = mean budget saved across datasets (Skip mode only). $\dagger$~our default method.}
\label{tab:threshold_full}
\resizebox{\textwidth}{!}{%
\begin{tabular}{c|l|l|l|ccccc|cr}
\toprule
\textbf{Model} & \textbf{Exit} & \textbf{Mode} & $\boldsymbol{\theta}$ & \textbf{DeepScaler} & \textbf{GPQA} & \textbf{HH-RLHF} & \textbf{Formal Logic} & \textbf{MATH500} & \textbf{Avg} & \textbf{\% Saved} \\
\midrule
\multirow{13}{*}{\rotatebox[origin=c]{90}{Qwen2.5-1.5B}} & \cellcolor{green!15}None$^\dagger$ & \cellcolor{green!15}-- & \cellcolor{green!15}-- & \cellcolor{green!15}\textbf{22.4} & \cellcolor{green!15}\textbf{26.9} & \cellcolor{green!15}51.7 & \cellcolor{green!15}\textbf{47.1} & \cellcolor{green!15}\textbf{52.5} & \cellcolor{green!15}\textbf{40.1} & \cellcolor{green!15}-- \\
\cmidrule(lr){2-11}
 & \multirow{6}{*}{Hard}
   & \multirow{3}{*}{Redist.} & 0.3 & 21.8 & 26.1 & 49.4 & 45.8 & 51.5 & 38.9 & -- \\
 & &  & 0.5 & 22.2 & 26.2 & 50.3 & 45.1 & 50.7 & 38.9 & -- \\
 & &  & 0.7 & 21.8 & 25.7 & 49.4 & 43.2 & 51.4 & 38.3 & -- \\
\cmidrule(lr){3-11}
 & & \multirow{3}{*}{Skip} & 0.3 & 22.2 & 26.2 & 49.5 & 45.4 & 51.6 & 39.0 & 0.8 \\
 & &  & 0.5 & 21.8 & 25.7 & 49.2 & 44.0 & 50.9 & 38.3 & 2.9 \\
 & &  & 0.7 & 20.9 & 25.6 & 50.9 & 44.2 & 49.6 & 38.2 & 7.4 \\
\cmidrule(lr){2-11}
 & \multirow{6}{*}{Easy}
   & \multirow{3}{*}{Redist.} & 0.3 & 20.6 & 25.3 & \textbf{52.3} & 44.6 & 46.7 & 37.9 & -- \\
 & &  & 0.5 & 21.7 & 26.0 & 50.8 & 45.2 & 50.3 & 38.8 & -- \\
 & &  & 0.7 & 22.0 & 26.0 & 49.7 & 45.0 & 51.1 & 38.8 & -- \\
\cmidrule(lr){3-11}
 & & \multirow{3}{*}{Skip} & 0.3 & 19.4 & 24.7 & 51.7 & 44.6 & 45.3 & 37.2 & 60.5 \\
 & &  & 0.5 & 21.3 & 25.7 & 51.6 & 44.7 & 48.7 & 38.4 & 42.6 \\
 & &  & 0.7 & 22.1 & 26.3 & 51.7 & 45.5 & 50.3 & 39.2 & 24.4 \\
\midrule
\multirow{13}{*}{\rotatebox[origin=c]{90}{Llama3.2-3B}} & \cellcolor{green!15}None$^\dagger$ & \cellcolor{green!15}-- & \cellcolor{green!15}-- & \cellcolor{green!15}\textbf{21.4} & \cellcolor{green!15}\textbf{22.7} & \cellcolor{green!15}\textbf{54.3} & \cellcolor{green!15}43.4 & \cellcolor{green!15}\textbf{38.3} & \cellcolor{green!15}\textbf{36.0} & \cellcolor{green!15}-- \\
\cmidrule(lr){2-11}
 & \multirow{6}{*}{Hard}
   & \multirow{3}{*}{Redist.} & 0.3 & 20.6 & 21.7 & 47.8 & 42.9 & 37.8 & 34.1 & -- \\
 & &  & 0.5 & 20.7 & 21.9 & 47.7 & \textbf{44.4} & 38.0 & 34.6 & -- \\
 & &  & 0.7 & 20.3 & 21.4 & 47.3 & 41.9 & 37.6 & 33.7 & -- \\
\cmidrule(lr){3-11}
 & & \multirow{3}{*}{Skip} & 0.3 & 20.3 & 21.4 & 47.4 & 43.0 & 36.9 & 33.8 & 15.9 \\
 & &  & 0.5 & 19.9 & 21.3 & 47.1 & 42.9 & 37.2 & 33.7 & 22.9 \\
 & &  & 0.7 & 19.1 & 21.1 & 46.6 & 43.2 & 36.9 & 33.4 & 28.0 \\
\cmidrule(lr){2-11}
 & \multirow{6}{*}{Easy}
   & \multirow{3}{*}{Redist.} & 0.3 & 20.7 & 20.3 & 47.4 & 38.5 & 32.3 & 31.8 & -- \\
 & &  & 0.5 & 20.5 & 20.8 & 48.4 & 38.4 & 35.6 & 32.7 & -- \\
 & &  & 0.7 & 19.9 & 21.2 & 47.7 & 40.7 & 37.8 & 33.5 & -- \\
\cmidrule(lr){3-11}
 & & \multirow{3}{*}{Skip} & 0.3 & 18.4 & 19.5 & 46.7 & 39.2 & 31.4 & 31.1 & 42.8 \\
 & &  & 0.5 & 19.8 & 20.4 & 47.3 & 39.4 & 34.1 & 32.2 & 33.6 \\
 & &  & 0.7 & 21.0 & 21.3 & 47.7 & 41.3 & 36.2 & 33.5 & 23.3 \\
\midrule
\multirow{13}{*}{\rotatebox[origin=c]{90}{Qwen2.5-7B}} & \cellcolor{green!15}None$^\dagger$ & \cellcolor{green!15}-- & \cellcolor{green!15}-- & \cellcolor{green!15}38.1 & \cellcolor{green!15}35.6 & \cellcolor{green!15}53.2 & \cellcolor{green!15}59.2 & \cellcolor{green!15}69.9 & \cellcolor{green!15}\textbf{51.2} & \cellcolor{green!15}-- \\
\cmidrule(lr){2-11}
 & \multirow{6}{*}{Hard}
   & \multirow{3}{*}{Redist.} & 0.3 & 38.9 & \textbf{35.7} & 51.8 & 60.0 & 69.9 & \textbf{51.2} & -- \\
 & &  & 0.5 & \textbf{39.1} & \textbf{35.7} & 52.4 & 59.1 & 69.9 & \textbf{51.2} & -- \\
 & &  & 0.7 & 37.2 & 35.2 & 52.4 & 59.2 & 69.9 & 50.8 & -- \\
\cmidrule(lr){3-11}
 & & \multirow{3}{*}{Skip} & 0.3 & 37.2 & 35.4 & 52.2 & 60.0 & \textbf{70.1} & 51.0 & 6.2 \\
 & &  & 0.5 & 37.2 & 34.8 & 52.2 & 57.4 & 69.1 & 50.1 & 11.2 \\
 & &  & 0.7 & 36.9 & 35.2 & 52.7 & 59.3 & 69.6 & 50.8 & 17.5 \\
\cmidrule(lr){2-11}
 & \multirow{6}{*}{Easy}
   & \multirow{3}{*}{Redist.} & 0.3 & 30.2 & 33.1 & 52.4 & 58.3 & 66.5 & 48.1 & -- \\
 & &  & 0.5 & 34.9 & 34.7 & \textbf{53.3} & 59.9 & 68.0 & 50.2 & -- \\
 & &  & 0.7 & 36.1 & 35.4 & 53.1 & \textbf{61.4} & 69.3 & 51.0 & -- \\
\cmidrule(lr){3-11}
 & & \multirow{3}{*}{Skip} & 0.3 & 28.7 & 32.5 & 52.2 & 57.6 & 66.1 & 47.4 & 49.5 \\
 & &  & 0.5 & 29.7 & 33.0 & 53.1 & 57.9 & 66.7 & 48.1 & 32.3 \\
 & &  & 0.7 & 39.0 & 35.5 & 52.3 & 59.0 & 68.8 & 50.9 & 11.4 \\

\bottomrule
\end{tabular}
}
\end{table*}

\begin{figure*}[ht]
    \centering
    \includegraphics[width=\textwidth]{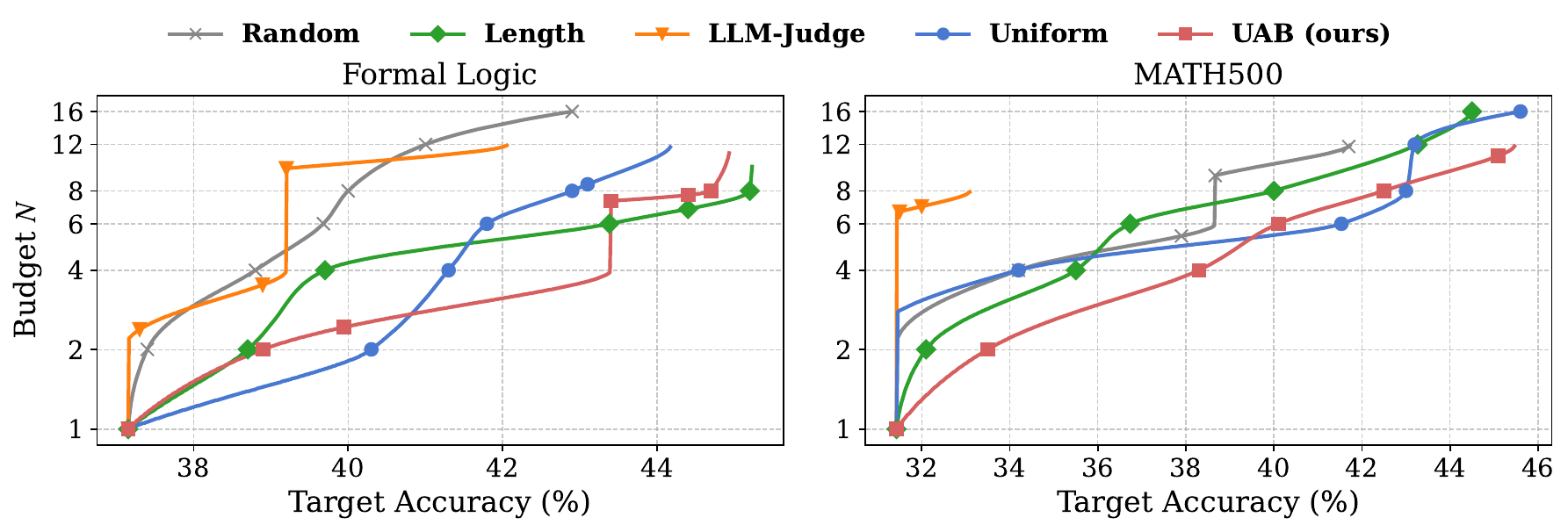}
    \caption{Budget $N$ required to reach a given accuracy target ({Llama3.2-3B}).
    Same format as Figure~\ref{fig:cost_accuracy}; UAB lies below Uniform across the
    intermediate accuracy range. LLM-Judge's curve on MATH500 terminates at $N{=}8$
    where its accuracy peaks due to collapsed judge labels (see
    Table~\ref{tab:cost_accuracy_appendix}).}
    \label{fig:cost_accuracy_llama}
\end{figure*}

\end{document}